\documentclass[12pt,reqno,twoside]{amsart}
\usepackage{amsfonts,amsmath,amssymb}
\usepackage{mathrsfs,mathtools}
\usepackage{enumerate}
\usepackage{hyperref}
\usepackage{esint}
\usepackage{graphicx}
\usepackage{bm}
\usepackage{commath}
\usepackage{esint}
\usepackage{placeins}
\usepackage{flafter}
\usepackage{algorithm}
\usepackage{algpseudocode}
\usepackage{tikz}
\usepackage[textsize=small]{todonotes}
\usepackage[dvips,bottom=1.4in,right=1in,top=1in, left=1in]{geometry}
\usepackage[latin1]{inputenc}
\usepackage{comment}
\usepackage{float}
\usepackage{setspace}

\DeclareMathAlphabet{\mathpzc}{OT1}{pzc}{m}{it}
\numberwithin{equation}{section}

\hfuzz=4pt
\makeatletter
\def\eqnarray{\stepcounter{equation}\let\@currentlabel=\theequation
	\global\@eqnswtrue
	\tabskip\@centering\let\\=\@eqncr
	$$\halign to \displaywidth\bgroup\hfil\global\@eqcnt\z@
	$\displaystyle\tabskip\z@{##}$&\global\@eqcnt\@ne
	\hfil$\displaystyle{{}##{}}$\hfil
	&\global\@eqcnt\tw@ $\displaystyle{##}$\hfil
	\tabskip\@centering&\llap{##}\tabskip\z@\cr}
\def\endeqnarray{\@@eqncr\egroup
	\global\advance\c@equation\m@ne$$\global\@ignoretrue}

\setcounter{secnumdepth}{2} \setcounter{section}{0}
\newtheorem{theorem}{Theorem}[section]

\newtheorem{definition}[theorem]{Def{}inition}

\newtheorem{lemma}[theorem]{Lemma}

\numberwithin{equation}{section}

\usepackage[textsize=small]{todonotes}
\setlength{\marginparwidth}{2.2cm}

\newcommand{\DS}[1]{{\color{magenta}~\textsf{#1}}}


\DeclareMathOperator{\argmin}{argmin}

\setcounter{MaxMatrixCols}{20}

\title[GNEP Based Dynamic Segmentation and Motion Estimation]{GNEP Based Dynamic Segmentation and Motion Estimation for Neuromorphic Imaging} 
\date{\today}

\thanks{
	This work is partially supported by NSF grant DMS-2110263 and the Air Force Office of Scientific Research (AFOSR) under Award NO: FA9550-22-1-0248. 
}

\author{Harbir Antil}
\address{H. Antil. The Center for Mathematics and Artificial Intelligence
	(CMAI) and Department of Mathematical Sciences, George Mason University,
	Fairfax, VA 22030, USA.}
\email{hantil@gmu.edu}
\author{David Sayre}
\address{D. Sayre. The Center for Mathematics and Artificial Intelligence
	(CMAI) and Department of Mathematical Sciences, George Mason University,
	Fairfax, VA 22030, USA.}
\email{dsayre@gmu.edu}

\begin{document}				
	
	\begin{abstract}	
		This paper explores the application of event-based cameras in the domains of image segmentation and motion estimation. These cameras offer a groundbreaking technology by capturing visual information as a continuous stream of asynchronous events, departing from the conventional frame-based image acquisition. We introduce a Generalized Nash Equilibrium based framework that leverages the temporal and spatial information derived from the event stream to carry out segmentation and velocity estimation. To establish the theoretical foundations, we derive an existence criteria and propose a multi-level optimization method for calculating equilibrium. The efficacy of this approach is shown through a series of experiments.
	\end{abstract}

	\keywords{Generalized Nash equilibrium, GNEPs, neuromorphic imaging, segmentation, velocity tracking, existence of solution.}
	\subjclass[2010]{
		65K05, 
		90C26, 
		90C46,  
		49J20  
	}
	
	\maketitle

	\section{Introduction} \label{s:Intro}
	Event (Neuromorphic) cameras are a class of biologically inspired sensors that employ an asynchronous sampling mechanism to record data based on the change in light intensity at each pixel \cite{antil2023bilevel,gallego2020event,stoffregen2018simultaneous}. Specifically, an event is triggered at a pixel if and only if the change in light intensity at that pixel exceeds a predefined threshold. As a consequence, no events are recorded if there is no change to the scene.
	In contrast, if the scene is dynamic, either due to camera movement or due to the movement of an object in the scene each pixel of the event camera records intensity changes with a temporal resolution on the order of microseconds \cite{gallego2020event,stoffregen2019event}. Mathematically, an event camera can be modeled as a spatio-temporal point process, where the occurrence of an event at a pixel is a binary-valued function of time \cite{stoffregen2019event}. In \cite{antil2023bilevel}, the authors introduced a mathematically rigorous bilevel optimization model, including a complete analysis, to carry out reconstructions from the event data. This model can use both a combination of traditional frames and event data. Figure \ref{f:nucamera} illustrates the differences between traditional and neuromorphic cameras.
	
	\begin{figure}[!htb]
		\centering
		\includegraphics[width=0.9\textwidth]{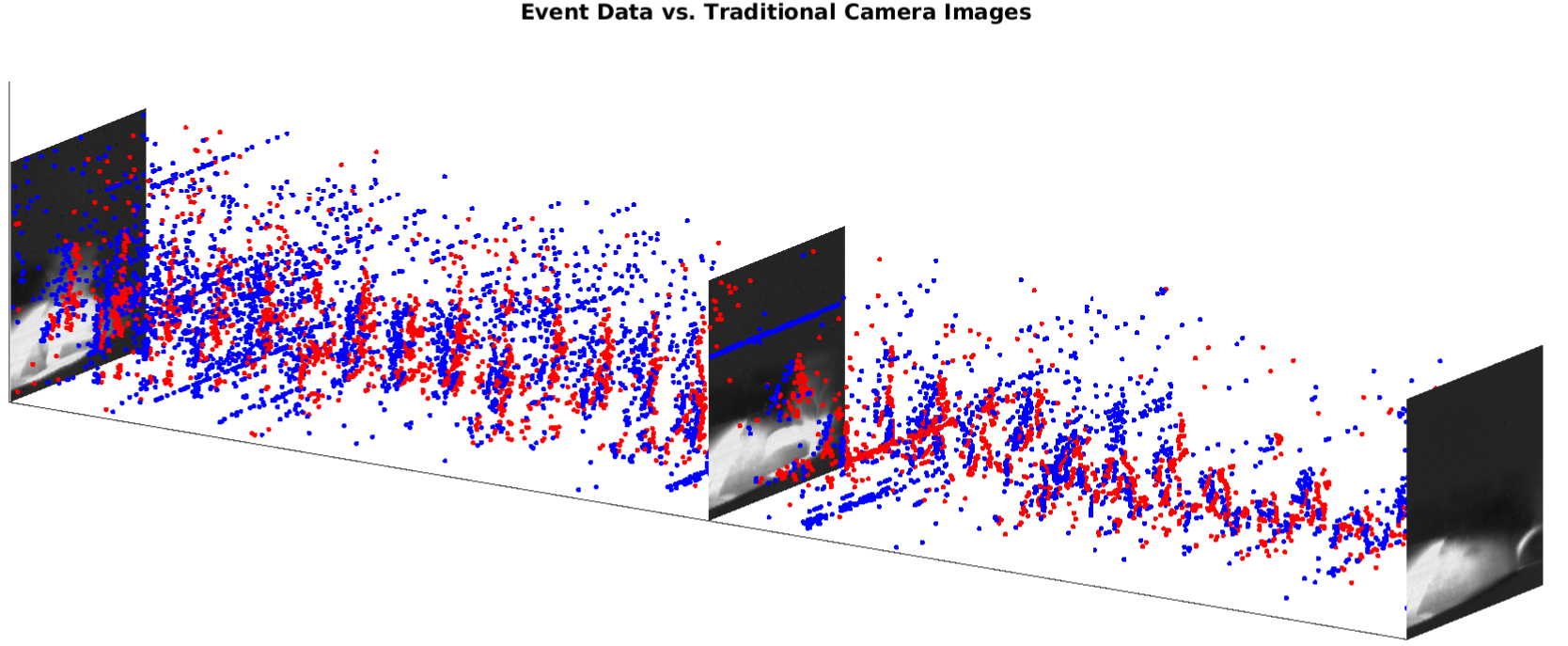}
		\caption{
			Comparison of the output of a standard frame-based camera (3 frames are shown)
			and event camera (dots indicate the events). The standard camera 
			outputs frames at a fixed rate, thus sending redundant information when there 
			is no motion in the scene. However, the event based camera captures the 
			change in scene. For a model that can utilize all this information to carry 
			out high quality reconstructions, we refer to \cite{antil2023bilevel}.
		}
		\label{f:nucamera}
	\end{figure}
	
	Event image segmentation has emerged as a topic of interest, offering promising solutions to the challenging task of segmenting dynamic scenes captured by event-based cameras. Under the assumption of constant illumination, event occurrences in event-based cameras can be attributed to the relative motion between the camera and the observed scene \cite{gallego2019focus,stoffregen2019event2}. As events are primarily indicative of changes in brightness, they predominantly arise from scene edges, including contours, texture variations, and similar visual characteristics \cite{zhou2021event}. In the case of a stationary camera, events solely originate from moving objects within the scene \cite{parameshwara2021spikems}. However, when the camera itself undergoes motion, events are generated by both the edges of moving objects and the motion-induced edges of stationary objects \cite{liu2020globally}. This interplay between the camera's motion and the inherent dynamics of the scene adds complexity to event-based image segmentation, necessitating robust approaches to effectively distinguish between these sources of event occurrences. Various methods have been proposed to solve these problems including learning based \cite{parameshwara2021spikems,zihao2018unsupervised}, probabilistic \cite{zhu2017event} and contrast maximization models \cite{gallego2018unifying,glover2016event,stoffregen2019event,stoffregen2018simultaneous,stoffregen2019event2,vasco2017independent,zhou2021event}.
	
	Contrast maximization has emerged as a promising technique for event image segmentation. At its core, this approach focuses on finding a motion parameter such that the contrast is maximized when events are projected to an image plane.
	In the case of multiple objects a secondary task of discerning which events belong to which object must be undertaken \cite{stoffregen2019event,zhou2021event}. While these models/methods represent a leap forward but they are largely heuristic in nature and various questions remain about these models such as the existence of solutions, initialization of parameters, and optimization procedures to find solutions. 
	
	In this paper we provide alternative formulation to the previous methods by introducing a game theoretic model based on a Stackelberg Game. The presented approach is mathematically and algorithmically rigorous. A Stackelberg game is a type of non-cooperative game in game theory, named after the German economist Heinrich von Stackelberg. In a Stackelberg game, players make their decisions sequentially, with one player known as the leader or the Stackelberg player, acting first, and the other player or players, known as the followers, making their decisions after observing the leader's choice \cite{hu2015multi,julien2018stackelberg,kulkarni2015existence,pang2009quasi}, see also \cite{SCui_UVShanbhag_2023a} for  complexity guarantees in the context of multi-leader and multi-follower games. Stackelberg games are often modeled mathematically as a Nash Equilibrium Problem or a Generalized Nash Equilibrium problem \cite{aussel2020short,hu2015multi,wang2017distributed}. The Generalized Nash Equilibrium problem (GNEP) provides a mathematical framework for modeling a broad range of real-world scenarios involving strategic interactions among multiple autonomous agents. Specifically, GNEPs capture situations in which agents pursue individual objectives that are interdependent, thereby necessitating the consideration of others' decisions in determining their own optimal actions \cite{aussel2008generalized,facchinei2010generalized}. 
	
	In contrast to previous work we define the motion parameter as a function of the neuromorphic events versus defining the events as a function of their associated motion parameter. Our players will adopt a strategic approach by assigning a confidence value to each event within the dataset, indicating their level of certainty regarding the association between events. Utilizing contrast maximization techniques we can evaluate the chosen strategy of each player. This allows us to identify the events that were generated by the same object and perform event based image segmentation. We enforce a restriction on the sum of confidence scores assigned to each event, limiting it to a maximum value of one. This results in a jointly constrained strategy space. In order to model this problem effectively we resort to the Generalized Nash Equilibrium model versus the Nash Equilibrium.
	In the remainder of the paper we seek to:
	\begin{enumerate}
		\item Introduce a Generalized Nash Equilibrium Problem for the segmentation of events.
		\item Establish existence of equilibrium for the proposed model.
		\item Propose and validate an $N$-level optimization method to find equilibrium.
		\item Demonstrate the effectiveness of the model.
	\end{enumerate}
	A GNEP with players that have a non-convex objective functions present several challenges. The non-convexity of the objective functions prohibits the use of traditional approaches like quasi-variational inequality methods, which depend on the convexity of the objective function as well as joint convexity of the strategy space. Instead, alternative techniques must be employed to find equilibrium solutions. One such approach is multi-level optimization, which involves decomposing the problem into multiple levels and optimizing them iteratively. This allows for tackling the non-convexity of the objective function by breaking it down into more manageable sub-problems. However, the complexity and computational demands associated with multi-level optimization make it a challenging task.
	
	\medskip
	\noindent
	{\bf Outline.} The article has been organized as follows: Section~\ref{s:not} focuses on 
	notations and preliminary work. In particular, working of GNEPs and neuromorphic
	cameras have been described. Then in section~\ref{s:pw}, we review the previous contrast 
	maximization methods. In sections \ref{s:model} and \ref{s:analysis} we introduce our model 
	as well as provide results pertaining to existence of solutions. Moreover, in section~\ref{s:numres} 
	we provide the experimental set-up for our results. Subsequently, several examples are 
	considered in section~\ref{s:nex}.

	\section{Preliminaries}
	\label{s:not}
	
	\subsection{Generalized Nash Equilibrium Problems (GNEPs)}
	\subsubsection{Overview}
	A Generalized Nash Equilibrium Problem (GNEP) is a mathematical formulation used to find the equilibrium solution in games involving multiple players, each with their own independent decision-making. The problem involves determining a set of strategies for all players such that no player can unilaterally improve their payoff by changing their strategy while taking into account the strategies chosen by the other players. We seek to find a solution that satisfies the objective function of each player while considering the strategies of the other players.
	
	\subsubsection{Notation}
	Consider a set of $j = 1, \dots, N$ agents that have control over $N_v = \sum_{j=1}^{N} n_j$ variables where $n_j$ represents the number of variables that agent $j$ controls. We denote the variables agent $j$ controls by the vector $\bm{z}^j \in \mathbb{R}^{n_j}$. 
	Additionally, we denote the vector of all decision variables as: 
	\[
	\bm{z} := 
	\begin{pmatrix}
	\bm{z}^1 & \bm{z}^2 & \cdots & \bm{z}^N
	\end{pmatrix}^\top \in \mathbb{R}^{N_v}\, .
	\]
	We extend this notation such that 
	\[
	\bm{z}^{-j} := \begin{pmatrix}
	\bm{z}^1 & \bm{z}^2 & \cdots & \bm{z}^{j-1} & \bm{z}^{j+1} & \cdots & \bm{z}^N
	\end{pmatrix}^\top \in \mathbb{R}^{N_v - n_j}\,
	\]
	represents the decision variables of all agents except agent $j$.
	The goal of each agent is to minimize their own objective function which we denote by $J^j(\bm{z}^j,\bm{z}^{-j})$ while taking into account the strategies of all other players. The feasible strategy set for player $j$ is denoted by $\bm{Z}^j(\bm{z}^{-j})$. This set captures the strategies available to player $j$ while considering the constraints imposed by the strategies chosen by the other players. We can then express the minimization problem for each player $j$ as:
	\begin{equation}\label{eq:gnep}
	\min_{\bm{z}^j \in\bm{Z}^j(\bm{z}^{-j})} J^j(\bm{z}^j,\bm{z}^{-j}) \, .
	\end{equation}
	Our goal is then to solve the above minimization problem to find an optimal 
	solution vector which we now define. 
	\begin{definition}[Generalized Nash Equilibrium \cite{facchinei2010generalized}]\label{d:gne}
		A point $\bar{\bm z}$ is said to be a (generalized Nash) equilibrium or a solution to 
		the Generalized Nash Equilibrium Problem (GNEP) if it satisfies the following:
		\begin{align*}
		\bar{\bm z} = \begin{pmatrix}
		\bar{\bm{z}}^1 & \bar{\bm{z}}^2 & \cdots & \bar{\bm{z}}^N
		\end{pmatrix}^\top
		\end{align*}
		such that
		\begin{align*}
		\bar{\bm{z}}^j \in \bm{Z}^j(\bar{\bm{z}}^{-j}) \quad \text{ for all } \quad j = 1, 2, \dots, N.
		\end{align*}
	\end{definition} 
	For more information on GNEP's and their solutions we refer the reader to 
	\cite{facchinei2010generalized,harker1991generalized,kulkarni2014shared,schiro2013solution}.
	
	\subsection{Neuromorphic Cameras}\label{s:e_dat}
	\subsubsection{Overview}
	Neuromorphic cameras consist of individual pixels that independently detect changes in light intensity in real time. These cameras sample light intensity at a rapid rate, typically on the order of microseconds, and record events when the intensity exceeds a preset threshold $c$. The use of high-speed sampling and the independent nature of neuromorphic camera pixels make them highly resistant to issues such as overexposure and image blurring. The latter has been recently 
	explored in detail in \cite{antil2023bilevel}, both from a theoretical and algorithmic point of view.
	
	\subsubsection{Data representation}\label{s:dat_rep}
	The output of a neuromorphic camera is of the form: $e = (x,y,t,p)$. If we consider 
	$k = 1,\dots,N_e$ distinct events, then we can describe each event individually 
	with the following definition:
	\[
	\mathbb{R}^{1 \times 4} \ni \bm{e}_k := (\bm{x}_k,\bm{y}_k,\bm{t}_k,\bm{p}_k).
	\]
	where subscript $k$ in $\bm x_k$ denotes the $k$-th component of vector $\bm x$.
	Moreover, $(\bm{x}_k,\bm{y}_k) \in \bm{\Omega} = \{1,2,\dots,N_x\} \times \{1,2,\dots,N_y\}$ 
	denotes the pixel location of event $k$ with $N_x \times N_y$ being the resolution of the camera. 
	Moreover, $\bm{t}_k \in \mathbb{R}$ represents the time at which event $k$ occurred 
	and $\bm{p}_k \in \{-1,1\}$ represents the polarity of event $k$.
	
	The polarity recorded by a neuromorphic camera at a given pixel $(\bm{x}_k,\bm{y}_k)$ is a 
	function of the instantaneous light intensity ($\bm{q}_k$) recorded at $(\bm{x}_k,\bm{y}_k)$ 
	at time $\bm{t}_k$. Furthermore, the reference light intensity for pixel $(\bm{x}_k,\bm{y}_k)$ 
	at time $\bm{t}_k$ is given by $\bm{q}_{ref_k}$. Using these quantities we define the 
	polarity as:
	\[
	\bm{p}_k := \begin{cases}
	+1, & \log \left( \frac{\bm{q}_{k}}{\bm{q}_{{ref_k}}}\right) \geq c\\[.2in]
	-1, & \log \left( \frac{\bm{q}_{k}}{\bm{q}_{{ref_k}}}\right) \leq -c.
	\end{cases}
	\]
	No event is recorded if $\bm{p}_k \in (-c, c)$. Notice that, $c$ here indicates the
	camera threshold.
	
	Next, we can describe a collection of $N_e \in \mathbb{N}$ events with the following definition: 
	\[
	\mathbb{R}^{N_e \times 4} \ni \bm{e} := (\bm{x},\bm{y},\bm{t},\bm{p}).
	\]
	We are interested in whether an event has occurred
		or not, the positive or negative sign on polarity is not relevant here. Therefore,
		we will omit the polarity column vector and redefine $\bm e$ as:
	\[
	\mathbb{R}^{N_e \times 3} \ni \bm{e} := (\bm{x},\bm{y},\bm{t}).
	\]
	A visual representation of event data is given in Figure \ref{f:ndata}.
	\begin{figure}[!htb]
		\centering
		\includegraphics[width=.45\textwidth]{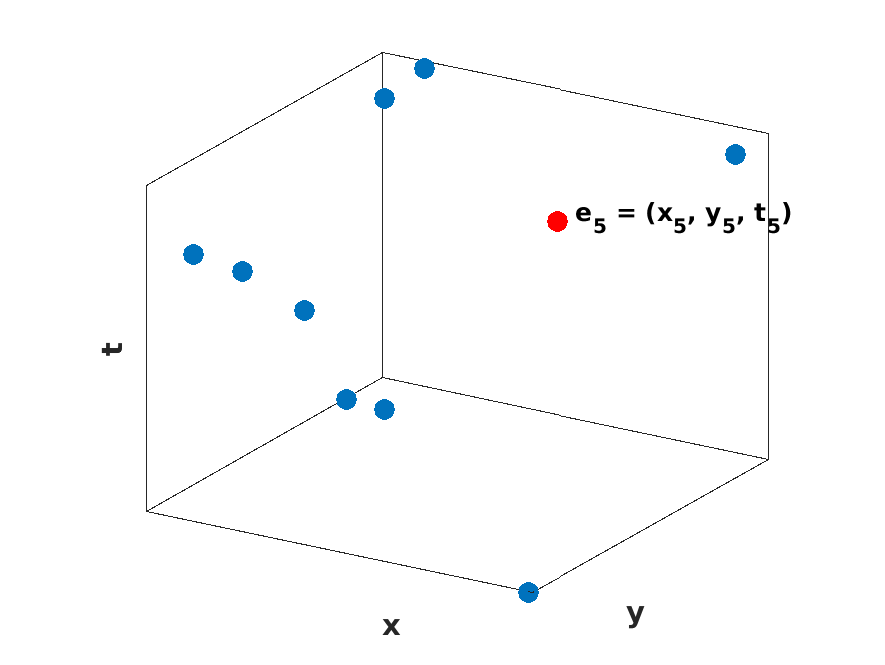}  \qquad
		\caption{In this figure we isolate an event, $\bm{e}_5$, shown as the red dot.}
		\label{f:ndata}
	\end{figure}
	\subsection{Warping Function}\label{s:warp}
	In the context of this paper the warping function refers to a mapping $\bm{W}: \mathbb{R}^3 \rightarrow \mathbb{R}^2$. In general, given some motion parameter $\bm{\theta}$ and a reference time $t_0 \in \mathbb{R}$ we project the events, $\bm{e}$ to this reference time. This process is shown visually in Figures \ref{f:c_projlines}, \ref{f:c_proj2} in Appendix~\ref{s:app}.
	
	\subsection{Row-wise 2-Norm}\label{s:r2n}
	We define the operator $\|\cdot \|_{2,r}: \mathbb{R}^{M \times N} \rightarrow \mathbb{R}^M$ to be a row-wise 2 norm. Given a matrix $V \in \mathbb{R}^{M \times N}$ we have the following:
	\[
	\|V\|_{2,r} = (\|V_{(1,:)}\|_2 \;\; \|V_{(2,:)}\|_2 \;\;...\;\; \|V_{(M,:)}\|_2 )^\top.
	\]
	Here $V_{(M,:)}$ represents the $M$-th row of $V$. 
	\section{Prior Work}\label{s:pw}
	The approach presented in this paper draws inspiration from the Contrast Maximization (CM) technique, which we now formally introduce. The CM method seeks to find a set of motion parameters $\{\bm{\theta}^1,\bm{\theta}^2, \dots, \bm{\theta}^N\}$ (one for each object in the scene) that maximizes the contrast in an image of  warped events. For a visual interpretation of $\bm\theta^j$, we refer to Appendix~\ref{s:app}. This image of warped events, denoted as $\bm{I} \in \mathbb{R}^{N_x \times N_y}$, is generated by projecting clusters of events onto a reference time, $t_0 \in \mathbb{R}$, via a warping function (cf.~Appendix~\ref{s:app}). When the parameter $\bm{\theta}^j$ accurately captures the motion of a specific object, the projected events associated with that object coalesce into a distinct and well-defined cluster within the image $\bm{I}$.  The level of clarity in this image can be assessed using various functions, such as Image Variance, Magnitude of the Image Gradient, and Image Entropy \cite{gallego2019focus}. The level of clarity serves as a metric for how accurately the motion parameter $\bm{\theta}$ approximates the true motion. Furthermore, given a set of motion parameters, $\{ \bm \theta^j \}_{j=1}^N$, we are then able to perform event based image segmentation by segmenting the events by their respective motion parameters. A general outline for techniques centering on contrast maximization (for a single object)  is given by \cite{zhou2021event}: 
	\begin{enumerate}
		\item Estimate/Initialize a motion parameter, $\bm{\theta}$. 
		\item Use a warping function, $\bm{W}(\bm{\theta})$, to project events to a reference time, $t_0$, which generates an image of warped events, $\bm{I}$.
		\item Evaluate $\bm{I}$ with a reward function.
		\item Perform an optimization step with respect to the motion parameter $\bm{\theta}.$
	\end{enumerate}
	Some care has to be given to initializing $\bm{\theta}$ since this problem is inherently non-convex and there are multiple local minima to consider \cite{stoffregen2019event}. Previous efforts have utilized prior knowledge such as optical flow information or other initialization techniques as shown in \cite{glover2016event,stoffregen2019event,vasco2017independent,zhou2021event}.
	In scenarios where multiple objects coexist within a scene several challenges arise: 
	\begin{enumerate}
		\item The difficulty in initialization of motion parameters is compounded due to multiple objects.
		\item When faced with events generated by multiple objects, the task extends beyond identifying motion parameters for each object. It becomes crucial to accurately discern the specific events associated with each object. This results in the need for a clustering step to be added to the contrast maximization outline. \cite{stoffregen2019event,zhou2021event}. 
		\item Additional prior knowledge may be needed such as number of objects in the scene \cite{stoffregen2019event}.
	\end{enumerate}
	
	Our work is a departure from the above standard approach. We do not use the motion parameter, $\bm{\theta}$, as our optimization variable. Instead we establish a GNEP framework where the motion parameters $\{\bm{\theta}^j\}_{j=1}^{N}$ are formulated as a function of the events and the strategies chosen by each player. We then seek to optimize the strategy of each player. With this framework we are able to avoid the need for prior knowledge in order to initialize the motion parameters and we do not need to perform an alternating optimization method as in \cite{stoffregen2019event} and \cite{zhou2021event}. In these prior methods, two variables had to be optimized, the motion parameters $\{\bm{\theta}^j\}_{j=1}^N$ and the cluster associations for each event, $\bm{P}_{j,k}$, with one variable being held fixed while the other is iteratively optimized.

	\section{A GNEP Based Model}\label{s:model}
	In our model, each player is assigned the task of selecting a strategy that maximizes their own local objective function, denoted as $J^j$. The chosen strategy involves assigning a confidence value to each event, where the confidence value assigned to event $k$ reflects the player's level of certainty that the events with non-zero confidence scores are generated by the same object.  An outline for each player is given as follows: 
	
	\begin{enumerate}
		\item Player $j$ chooses a strategy $\bm{z}^j$ while taking into account the strategies of the other players, $\bm{z}^{-j}$. We show an optimal strategy for two players in Figure \ref{f:c_p1p2}.
		\item Calculate the motion parameter for player $j$, denoted by $\bm{\theta}^j$, using a least squares approximation. This average motion of events is formally defined in Section \ref{s:ame}. The slope of the lines shown in Figure \ref{f:c_projlines} represent $\bm{\theta}^j$.
		\item Calculate the image of warped events, $\bm{I}^j$ based upon the strategy chosen, $\bm{z}^j$. An optimal image of warped events is shown in Figure \ref{f:c_proj2}.
		\item Evaluate the image of warped events, $\bm{I}^j$, with regard to a reward function.
		\item Perform an optimization step with respect to the strategy $\bm{z}^j$.
	\end{enumerate}
	Next, we describe our model mathematically.
	
	\subsection{Generalized Nash Equilibrium Model}
	Consider a set of ${N}$ players such that $j = 1,\dots,{N}$ and event data $\bm{e} \in \mathbb{R}^{N_e \times 3}$. 
	We will denote the $j$-th player strategy as:
	\begin{align}\label{eq:z}
	\bm{z}^j \in \bm{Z}^j (\bm{z}^{-j}) := \biggl\{\bm{z}^j \in \big[{0}, {1}\big]^{N_e} \; | \; \bm{g}(\bm{z}) \leq \bm{0}\biggr\}.
	\end{align}
	The strategy $\bm{z}^j$ represents the confidence values assigned by player $j$ to indicate their level of confidence 
	in the relationship between the events.
	Recall $N_e$ is the total number of events and $\bm{g}(\bm z)$ is a set of linear constraints given by: 
	\[
	\bm{g}(\bm z) := \sum_{j=1}^N \bm{z}^j - \bm{b}, 
	\]
	with\[ \bm{b} := \begin{pmatrix}
	1  &
	1 &
	\cdots &
	1
	\end{pmatrix}^\top \in \mathbb{R}^{\bm{N}_e}.\]
	The linear constraints, represented by the function $\bm{g}(\bm{z})$, impose a restriction on the sum of confidence values assigned to each event. This restriction ensures that the cumulative confidence values for each event do not exceed a maximum value of 1. By incorporating these constraints, we maintain the coherence and integrity of the confidence assignments, enabling a more accurate representation of the associations between events. 
	Next we will begin building our objective function starting with calculating $\bm{\theta}.$ 
	
	\subsection{Average Motion of Events}\label{s:ame}
	The average motion of events is the least squares approximation of the change in $x,y$ directions with respect to time of the chosen strategy.  We denote $\mathbb{H}(s)$ to be the Heaviside step function: 
	
	\begin{align}\label{eq:H}
	\mathbb{H}(s) := \begin{cases}
	1, \qquad s > 0 \\
	0,\qquad s \le 0 \, .
	\end{cases} 
	\end{align}
	
	We then calculate the change in the $x$-direction with respect to time of the events chosen in strategy 
	$\bm{z}^j$ by the following:
	\begin{align}\label{eq:tx}
	\theta_x^j :=   \frac{\mathbb{H}\big(\bm{z}^{j}\big)^\top \big((\bm{t} - \mu_{t}(\bm z^j) \cdot \bm{b}) \odot (\bm{x} - \mu_x(\bm z^j) \cdot \bm{b})\big)}{\mathbb{H}\big(\bm{z}^{j}\big)^\top \big((\bm{t} - \mu_{t}(\bm z^j) \cdot \bm{b}) \odot (\bm{t} - \mu_t(\bm z^j) \cdot \bm{b})\big)}.
	\end{align}
	
	Notice that $\mathbb{H}$ acts component-wise when applied to vectors.
	Furthermore $\mu_t(\bm{z}^j)$ and $\mu_x(\bm{z}^j)$ are defined as
	\begin{align}\label{eq:mx}
	\mu_t(\bm{z}^j) := \frac{\mathbb{H}\big(\bm{z}^{j}\big)^\top\bm{t}}{\sum_{k=1}^{N_e}{\mathbb{H}((\bm{z}^j)_k)}}
	\quad \mbox{and}
	\quad 
	\mu_x(\bm{z}^j) := \frac{\mathbb{H}\big(\bm{z}^{j}\big)^\top\bm{x}}{\sum_{k=1}^{N_e}{\mathbb{H}((\bm{z}^j)_k)}}.
	\end{align}
	We calculate $\theta_y^j$ similarly using $
	\bm{y}$ in place of $\bm{x}$ in Equation \eqref{eq:tx}. This gives us:
	\[
	\bm{\theta}^j := (\theta_x^j,\theta_y^j).
	\]
	\subsection{Warping Function}
	We define a velocity based warping function as follows: 
	\begin{equation}\label{eq:W}
	\mathbb{R}^{N_e \times 2} \ni \bm{W}^j(\bm{e},t_0,\bm{z}^j):= (\bm{x},\bm{y}) + ( t_0\cdot \bm{b}-\bm{t})\odot \big({\theta}^j_x \cdot \bm{b}, \; \theta_y^j \cdot \bm{b} \big) .
	\end{equation}
	Here the symbol $\odot$ represents element-wise multiplication known as the 
	Hadamard product. Next we formally define an image of warped events for 
	each player $j$.

	\subsection{Image of Warped Events}
	The image of warped events is the result of translating the events chosen by strategy $\bm{z}^j$ to a given time, $t_0 \in \mathbb{R}$ with the warping function $\bm{W}^j(\bm{e},t_0,\bm{z}^j).$ 
	We describe this process mathematically for a given pixel, $(\hat{x},\hat{y}) \in \bm{\Omega}$, as follows:
	\begin{align}\label{eq:I}
	\mathbb{R} \ni \bm{I}^j((\hat{{x}},\hat{{y}}),t_0,\bm{z}^j) := \big(\bm{z}^j\big)^\top  \delta\bigg(\| \bm{W}^j(\bm{e},t_0,\bm{z}^j) - (\hat{{x}} \cdot \bm{b}, \; \hat{{y}} \cdot \bm{b})\|_{2,r}\bigg).
	\end{align}
	Here $\| \cdot \|_{2,r}$ is the row-wise 2 norm defined in Section \ref{s:r2n} and $\delta$ 
	refers to the Kronecker delta centered at 0 defined as:
	\begin{align}\label{eq:kron}
	\delta(\gamma) = \begin{cases}
	0, \qquad \gamma \neq 0 \\
	1,\qquad \gamma = 0;
	\end{cases}
	\end{align}
	and is applied elementwise to vectors.
	We describe the full image of warped events as: 
	\[
	\bm{I}^j({\bm{\Omega}},t_0,\bm{z}^j) \in \mathbb{R}^{N_x \times N_y}.
	\]
	For the remainder of the paper we will omit $t_0$ from $\bm{I}^j$ and $\bm{W}^j$ for ease of notation. 
	\subsection{Objective Function}\label{s:objfun}
	Finally, we define our objective function $J^j$ as: 
	\begin{align}\label{eq:J}
	J^j(\bm{z}^j):= -E\big(\bm{I}^j({\bm{\Omega}},\bm{z}^j)\big) + \frac{\lambda_1}{2}\bigg(V(\bm{z}^j,\bm{x}) + V(\bm{z}^j,\bm{y})\bigg) + \frac{\lambda_2}{2} \|\mathbb{H}(\bm{z}^j)\|^2_2.
	\end{align}
	Where $E(\bm{I}^j)$ is a modified entropy function given by: 
	\begin{align}\label{eq:E}
	E(\bm{I}^j) = \sum_{\hat{{x}}=1}^{N_x}\sum_{\hat{{y}}=1}^{N_y}\bm{I}^j((\hat{{x}},\hat{{y}}),\bm{z}^j) \cdot \log\bigg(  \bm{I}^j((\hat{{x}},\hat{{y}}),\bm{z}^j)\bigg),
	\end{align}
	and
	\begin{align}\label{eq:Vj}
	V(\bm{z}^j,\bm{x}):= \frac{1}{N_x} \Bigg( \bigg(\mathbb{H}\big(\bm{z}^{j}\big) \odot \bm{x}\bigg) - \mu_{x}(\bm{z}^j)\cdot \bm{b} \Bigg)^\top\Bigg(\bigg(\mathbb{H}\big(\bm{z}^{j}\big) \odot \bm{x}\bigg) - \mu_{x}(\bm{z}^j)\cdot \bm{b} \Bigg).
	\end{align}
	Similarly, we define $V(\bm{z}^j,\bm{y})$. Notice that $V(\bm{z}^j,\cdot)$ denotes the 
	variance penalty term with $\lambda_1 \geq 0$ being the penalty parameter. Furthermore we have 
	the regularization term $\frac{\lambda_2}{2}\|\mathbb{H}(\bm{z}^j)\|^2_2$ with $\lambda_2 \ge 0.$ 
	This term serves to limit the number of extraneous events (noise) chosen by each player.
	Then we have the following model for each player, $j = 1,\dots,N$:
	\begin{align}\label{eq:Model}
	&\min \; \; \; J^j(\bm{z}^j) \; , \\
	&\mbox{subject to} \; \; \; \; {\bm{z}^j \in\bm{Z}^j(\bm{\bm{z}}^{-j})}, \nonumber
	\end{align}
	where $\bm{Z}^j(\bm{z}^{-j})$ is defined in \eqref{eq:z}.
	
	Since the objective functions, $J^j(\bm{z}^j)$, are not convex we are not able to solve \eqref{eq:Model} 
	directly using the standard approach of quasi-variational inequalities. Instead we rely on 
	leader / follower methods in the form of a $N$-level optimization problem to search for equilibrium of 
	\eqref{eq:Model}, where $N$ is the number of players \cite[C.3, S.4, pp.68-73]{dempe2020bilevel}. We present the $N$-level optimization problem as:
	\begin{align}\label{eq:opt}
	\min_{\bm{z}^N \in\bm{Z}^N(\bm{\bm{z}}^{-N})} J^N(\bm{z}^N)
	\end{align}
	subject to 
	\begin{align*}
	\bm{z}^j = \argmin \Big\{J^j(\bm{z}^j) \; | \; \sum_{i=1}^{j} \bm{z}^i\leq \bm{b}, \, \bm{z}^j \in [0,1]^{N_e} \Big\} \quad \text{for } j= 1,\dots,N-1.
	\end{align*}
	We notice a subtle difference between $\eqref{eq:opt}$ and $\eqref{eq:Model}$. Each player 
	in $\eqref{eq:Model}$ has complete information of all other player strategies. While in $\eqref{eq:opt}$ 
	each player only takes the strategy of the previous players into account. For example, for player 2 we have the following problem:  
	\[\bm{z}^2 = \argmin \Big\{J^2(\bm{z}^2) \; | \; \sum_{i=1}^{2} \bm{z}^i\leq \bm{b}, \, \bm{z}^2 \in [0,1]^{N_e} \Big\}.\] 
	Expanding the constraint we get:
	\[ 
	\sum_{i=1}^2 \bm{z}^i= \bm{z}^1 + \bm{z}^2.
	\] 
	Hence we must first calculate $\bm{z}^1$ before calculating $J^2$. Notice that, the player $N$ accounts for the strategies of players $1,\dots,N-1$. 
	For this reason, problems from $\eqref{eq:opt}$ will need to be solved in ascending order starting with player 1. 
	We note that we solve this problem in a single pass. 
\begin{algorithm}[H]
	\caption{N-level Optimization Algorithm}
	\flushleft
	\textbf{Input:} $\bm{e}$ - Event data, $\lambda_1,\lambda_2$ - Penalty parameters
	\begin{algorithmic}[1]
		\setstretch{1.3} 
		\For{$j = 1$ to $N$}
		\State Initialize $\bm{z}^j_{init}$ 
		\State Calculate $\bm{\bar{z}}^j$ shown in Equation~\eqref{eq:opt} \Comment $\bm{z}^j_{init}$ is used as the initial value.
		\EndFor
		\State \textbf{Return} $\bm{\bar{z}} = \begin{pmatrix}
		\bm{\bar{z}}^1 & \bm{\bar{z}}^2 &\cdots& \bm{\bar{z}}^N
		\end{pmatrix}^\top$ \Comment{This is the approximation to solution in Def.~\ref{d:gne}.}
	\end{algorithmic}
\end{algorithm}

	In the next section we show the existence of solutions for \eqref{eq:opt}, and that the set of 
	solutions for $j=1,\dots,N$ corresponds to an equilibrium of \eqref{eq:Model}.
	\section{Analysis of the Proposed Model}\label{s:analysis}
	
	\subsection{Existence of Equilibrium}
	To establish the validity of the $N$-level optimization presented in Equation \eqref{eq:opt} as a method 
	for computing an equilibrium to \eqref{eq:Model}, we begin by demonstrating the existence of a solution. 
	This process involves several steps, with the initial focus on establishing the finiteness of the set 
	$\{\mathbb{H}(\bm{z^j}) \; | \; \bm{z}^j \in [0,1]^{N_e}\}$.
	
	\begin{lemma}\label{l:finite}
		The set \{$\mathbb{H}(\bm{z}^j) \; | \; \bm{z}^j \in [0,1]^{N_e}\}$ is finite. 
	\end{lemma}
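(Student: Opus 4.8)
The plan is to exploit the fact that the Heaviside function has a two-point range, so that the component-wise map $\mathbb{H}$ collapses the uncountable cube $[0,1]^{N_e}$ onto a finite subset of $\{0,1\}^{N_e}$. In other words, although the domain is uncountably infinite, the image under $\mathbb{H}$ is trivially finite, and the proof reduces to a counting argument on binary vectors.

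First I would recall from the definition \eqref{eq:H} that the scalar map $\mathbb{H}$ takes only the values $0$ and $1$: for a coordinate $(\bm{z}^j)_k \in [0,1]$ one has $\mathbb{H}((\bm{z}^j)_k) = 0$ precisely when $(\bm{z}^j)_k = 0$, and $\mathbb{H}((\bm{z}^j)_k) = 1$ otherwise. Since $\mathbb{H}$ acts entrywise on vectors, every coordinate of $\mathbb{H}(\bm{z}^j)$ lies in $\{0,1\}$, and therefore $\mathbb{H}(\bm{z}^j) \in \{0,1\}^{N_e}$ for every $\bm{z}^j \in [0,1]^{N_e}$, regardless of the particular value of $\bm{z}^j$.

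Next I would bound the image by its codomain. The set $\{0,1\}^{N_e}$ consists of exactly $2^{N_e}$ distinct binary vectors and is hence finite. Because $\{\mathbb{H}(\bm{z}^j) \mid \bm{z}^j \in [0,1]^{N_e}\} \subseteq \{0,1\}^{N_e}$, and every subset of a finite set is finite, the claim follows, with cardinality at most $2^{N_e}$.

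The point worth flagging is not a genuine obstacle but a sanity check: no topological or measure-theoretic machinery is needed despite the uncountability of $[0,1]^{N_e}$, since finiteness is immediate once one observes that $\mathbb{H}$ admits only two output values per coordinate. This finiteness is precisely the structural feature that will subsequently let the equilibrium existence argument be reduced to a finite search over the binary support patterns $\mathbb{H}(\bm{z}^j)$.
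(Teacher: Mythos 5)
Your proof is correct and follows exactly the same route as the paper: observe that $\mathbb{H}(\bm{z}^j) \in \{0,1\}^{N_e}$ for every $\bm{z}^j \in [0,1]^{N_e}$, so the image is a subset of the finite set $\{0,1\}^{N_e}$ of cardinality $2^{N_e}$. No further comment is needed.
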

	\begin{proof}
		For ease of notation we omit the superscript $j$ from $\bm z$. 
		We notice that $\mathbb{H}(\bm{z}) \in \{0,1\}^{N_e}$ for all $\bm{z}$, thus $\{\mathbb{H}(\bm{z}) \; | \; \bm{z} \in [0,1]^{N_e}\} \subseteq \{0,1\}^{N_e}.$ Furthermore the set $\{0,1\}^{N_e}$ has cardinality of $2^{N_e}$ and is therefore finite. 
		Hence $\{\mathbb{H}(\bm{z}) \; | \; \bm{z} \in [0,1]^{N_e}\}$ is finite.
	\end{proof}
Next we show that the quantities $V(\bm{z},\cdot)$ and $V(\mathbb{H}(\bm{z}),\cdot)$ are equal.
\begin{lemma}\label{l:veq}
		For any $\bm{z} \in \mathbb{R}^{N_e}$ we have $V(\bm{z},\cdot) =V(\mathbb{H}(\bm{z}),\cdot) $.
\end{lemma}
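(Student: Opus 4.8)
The plan is to observe that the functional $V(\bm{z},\cdot)$ defined in \eqref{eq:Vj} sees its first argument only through the Heaviside transform $\mathbb{H}(\bm{z})$, and then to invoke the idempotency of $\mathbb{H}$. Concretely, the argument $\bm{z}^j$ enters \eqref{eq:Vj} in exactly two places: the factor $\mathbb{H}(\bm{z}^j)\odot\bm{x}$, and the mean $\mu_x(\bm{z}^j)$; and by \eqref{eq:mx} the mean itself is built only from $\mathbb{H}(\bm{z}^j)$. Hence it suffices to show that replacing $\bm{z}$ by $\mathbb{H}(\bm{z})$ leaves $\mathbb{H}(\cdot)$ and $\mu_x(\cdot)$ unchanged.

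The key step I would record first is that $\mathbb{H}$ is idempotent, i.e. $\mathbb{H}(\mathbb{H}(\bm{z}))=\mathbb{H}(\bm{z})$ for every $\bm{z}\in\mathbb{R}^{N_e}$. This is immediate from \eqref{eq:H}: as already noted in the proof of Lemma~\ref{l:finite}, $\mathbb{H}(\bm{z})\in\{0,1\}^{N_e}$, and applying $\mathbb{H}$ componentwise with $\mathbb{H}(0)=0$ and $\mathbb{H}(1)=1$ returns the same vector. Substituting $\mathbb{H}(\bm{z})$ for $\bm{z}^j$ in \eqref{eq:mx} and using idempotency then gives $\mu_x(\mathbb{H}(\bm{z}))=\mu_x(\bm{z})$, since every occurrence of $\mathbb{H}(\mathbb{H}(\bm{z}))$ collapses back to $\mathbb{H}(\bm{z})$ in both numerator and denominator.

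Finally I would substitute $\mathbb{H}(\bm{z})$ for $\bm{z}^j$ directly in \eqref{eq:Vj}: the factor $\mathbb{H}(\mathbb{H}(\bm{z}))\odot\bm{x}$ equals $\mathbb{H}(\bm{z})\odot\bm{x}$ by idempotency, and the mean $\mu_x(\mathbb{H}(\bm{z}))\cdot\bm{b}$ equals $\mu_x(\bm{z})\cdot\bm{b}$ by the previous step, so both quadratic factors coincide with those in $V(\bm{z},\bm{x})$ and the equality $V(\mathbb{H}(\bm{z}),\bm{x})=V(\bm{z},\bm{x})$ follows. The verbatim argument with $\bm{y}$ replacing $\bm{x}$ handles $V(\cdot,\bm{y})$, completing the proof. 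There is no genuine obstacle here; the only point meriting care is the recognition that $V$ depends on $\bm{z}$ exclusively through $\mathbb{H}(\bm{z})$, after which idempotency makes the identity collapse immediately.
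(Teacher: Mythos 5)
Your proposal is correct and is simply a fully spelled-out version of the paper's one-line argument (the paper just says the lemma ``follows directly from the definition of $V$''): both rest on the observation that $V$ depends on its first argument only through $\mathbb{H}(\bm{z})$, together with the idempotency $\mathbb{H}(\mathbb{H}(\bm{z}))=\mathbb{H}(\bm{z})$. No gap; your write-up just makes explicit what the paper leaves implicit.
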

\begin{proof}
	This follows directly from the definition of $V$ in \eqref{eq:Vj}.
\end{proof}
We are now ready to establish existence of solution to \eqref{eq:opt}.
\begin{theorem}\label{thm:exist}
	There exists a solution to each level of the optimization problem \eqref{eq:opt}.
\end{theorem}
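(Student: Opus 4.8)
The plan is to prove existence one level at a time: since the problems \eqref{eq:opt} are solved in ascending order, when I treat level $j$ I may regard the strategies $\bm z^1,\dots,\bm z^{j-1}$ produced by the lower levels as fixed data, so that the $j$-th level is a single minimization of $J^j$ over the feasible set
\[
F_j := \Big\{\bm z^j \in [0,1]^{N_e} \;\big|\; \textstyle\sum_{i=1}^{j}\bm z^i \le \bm b\Big\}.
\]
First I would observe that $F_j$ is closed and bounded, hence compact, and nonempty, since $\bm z^j=\bm 0$ is feasible by the level-$(j-1)$ constraint $\sum_{i=1}^{j-1}\bm z^i\le\bm b$. The difficulty that prevents an immediate appeal to the Weierstrass theorem is that $J^j$ is \emph{not} continuous on $F_j$: the Heaviside map $\mathbb H$, the Kronecker delta $\delta$, and the induced warp all jump whenever a coordinate of $\bm z^j$ crosses $0$.

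The crucial reduction uses Lemma~\ref{l:finite}. I would first verify that every ingredient of $J^j$ except the single affine factor $(\bm z^j)^\top$ in $\bm I^j$ depends on $\bm z^j$ only through $\mathbb H(\bm z^j)$: by \eqref{eq:mx} the means $\mu_t,\mu_x$ (and their $y$-analogues) are functions of $\mathbb H(\bm z^j)$, hence so is $\bm\theta^j$ by \eqref{eq:tx}, hence so is the warp $\bm W^j$ and therefore the frozen delta pattern $\delta(\|\bm W^j-(\hat x\bm b,\hat y\bm b)\|_{2,r})$ appearing in \eqref{eq:I}; the variance terms depend only on $\mathbb H(\bm z^j)$ by Lemma~\ref{l:veq}, and $\|\mathbb H(\bm z^j)\|_2^2$ trivially so. Consequently, on each cell $C_{\bm h}:=\{\bm z^j\in F_j\mid \mathbb H(\bm z^j)=\bm h\}$ the objective takes the form $J^j(\bm z^j)=-E(\bm I^j)+c_{\bm h}$ with $c_{\bm h}$ constant and $\bm I^j$ affine in $\bm z^j$. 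Since $t\mapsto t\log t$ is continuous on $[0,\infty)$ under the convention $0\log 0=0$ and $\bm I^j\ge \bm 0$, the map $g_{\bm h}(\bm z^j):=-E(\bm I^j)+c_{\bm h}$ is continuous on all of $F_j$, and $J^j\equiv g_{\bm h}$ on $C_{\bm h}$.

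With continuity on each cell established, I would convert the discontinuous problem into finitely many continuous ones. By Lemma~\ref{l:finite} only finitely many patterns $\bm h\in\{0,1\}^{N_e}$ occur, so $F_j=\bigcup_{\bm h}C_{\bm h}$ is a finite partition and $\inf_{F_j}J^j=\min_{\bm h:\,C_{\bm h}\neq\emptyset}\inf_{C_{\bm h}}g_{\bm h}$. Passing to the closure $\overline{C_{\bm h}}$, which is a compact face of $F_j$ on which $g_{\bm h}$ is continuous, Weierstrass yields $\min_{\overline{C_{\bm h}}}g_{\bm h}$; and because $C_{\bm h}$ is dense in $\overline{C_{\bm h}}$ (any point of the closure is a segment-limit of interior-support points), this minimum equals $\inf_{C_{\bm h}}g_{\bm h}$. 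Taking the smallest value over the finite list of patterns then identifies $\inf_{F_j}J^j$ together with a candidate point attaining it in some $\overline{C_{\bm h^\ast}}$.

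The step I expect to be the main obstacle is upgrading this candidate to a genuine minimizer of $J^j$: a point $\bm z^j_\ast$ realizing $\min_{\overline{C_{\bm h^\ast}}}g_{\bm h^\ast}$ may lie on a lower face, where its \emph{true} pattern $\bm h'=\mathbb H(\bm z^j_\ast)$ is strictly smaller than $\bm h^\ast$ and the actual value $J^j(\bm z^j_\ast)=g_{\bm h'}(\bm z^j_\ast)$ may differ from $g_{\bm h^\ast}(\bm z^j_\ast)$. I would handle this by the compactness/finiteness extraction directly on a minimizing sequence: choosing $\bm z^j_n\in F_j$ with $J^j(\bm z^j_n)\to\inf_{F_j}J^j$, compactness of $F_j$ gives a convergent subsequence and Lemma~\ref{l:finite} lets me further assume $\mathbb H(\bm z^j_n)\equiv\bm h$ is constant, whence $J^j(\bm z^j_n)=g_{\bm h}(\bm z^j_n)\to g_{\bm h}(\bm z^j_\ast)$ by continuity of $g_{\bm h}$. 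Verifying that the limit $\bm z^j_\ast$ attains this value under its own Heaviside pattern — i.e. establishing compatibility of the continuous pieces $g_{\bm h}$ and $g_{\mathbb H(\bm z^j_\ast)}$ at the shared boundary point — is the real work, and it is precisely what promotes "infimum over a compact set" to "minimum attained," giving the solution to level $j$ and, by ascending induction, to every level of \eqref{eq:opt}.
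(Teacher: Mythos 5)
Your route --- partitioning the feasible box into the finitely many cells $C_{\bm h}=\{\bm z^j : \mathbb H(\bm z^j)=\bm h\}$, observing that $J^j$ restricted to each cell extends to a continuous function $g_{\bm h}$ (everything except the linear factor $(\bm z^j)^\top$ in \eqref{eq:I} depends on $\bm z^j$ only through $\mathbb H(\bm z^j)$), and invoking compactness cell by cell --- is genuinely different from the paper's argument, and the observations you do make are correct. But the proof is not complete, and the gap is exactly the one you flag in your last paragraph. Your construction produces a point $\bm z^j_\ast\in\overline{C_{\bm h^\ast}}$ with $g_{\bm h^\ast}(\bm z^j_\ast)=\inf_{F_j}J^j$, yet the actual objective value there is $J^j(\bm z^j_\ast)=g_{\mathbb H(\bm z^j_\ast)}(\bm z^j_\ast)$, and nothing in your argument rules out $g_{\mathbb H(\bm z^j_\ast)}(\bm z^j_\ast)>g_{\bm h^\ast}(\bm z^j_\ast)$: a priori the infimum over a cell could be approached only as coordinates tend to $0$, with the value jumping up in the limit because $\bm\theta^j$, the warp, and the frozen delta pattern all change discontinuously when the support shrinks. ``Infimum is finite over finitely many compact pieces'' does not by itself yield attainment for a discontinuous $J^j$; labelling the compatibility check as ``the real work'' leaves precisely the existence claim unproved.

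The missing ingredient is a structural property of $J^j$, and it is the heart of the paper's proof: the pointwise inequality $\bm z_k\,\delta(\cdot)_k\le\mathbb H(\bm z_k)\,\delta(\cdot)_k$ propagates through \eqref{eq:I} and \eqref{eq:E} to give $E\bigl(\bm I(\bm\Omega,\bm z)\bigr)\le E\bigl(\bm I(\bm\Omega,\mathbb H(\bm z))\bigr)$ (inequality \eqref{eq:IzIHz}), which together with Lemma~\ref{l:veq} yields the monotonicity $J(\bm z)\ge J(\mathbb H(\bm z))$ for every feasible $\bm z$. In your language this says that on each cell $C_{\bm h}$ the infimum of $g_{\bm h}$ is attained at the binary corner $\bm h$ itself, which lies \emph{inside} $C_{\bm h}$, so the boundary-compatibility problem never arises: the whole problem collapses to minimizing $J$ over the finite set $\{\mathbb H(\bm z):\bm z\in[0,1]^{N_e}\}$ (Lemma~\ref{l:finite}), where a minimizer trivially exists and no continuity or compactness is needed at all. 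Two further points you would still need if you complete the argument this way: (i) the corner $\bm h=\mathbb H(\bm z^j)$ dominates $\bm z^j$ componentwise, so its feasibility for level $j\ge 2$ is not automatic --- the paper's Part~II secures it by noting that the lower-level solutions are themselves binary, which forces $\bm z^j$ to vanish on the support of $\sum_{i<j}\bar{\bm z}^i$ and leaves a plain box on the complementary coordinates; (ii) the passage from the pointwise bound $\bm I(\cdot,\bm z)\le\bm I(\cdot,\mathbb H(\bm z))$ to the entropy bound uses that $\bm I(\cdot,\mathbb H(\bm z))$ is a nonnegative integer, since $t\mapsto t\log t$ is not monotone on all of $[0,1]$.
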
 
\begin{proof}
	For ease of notation we omit the superscript $j$. 
	
	\smallskip
	\noindent	
	\boxed{\text{Part I: Single player case.}} 
	We focus on player 1 as the result will directly
	apply to other players. First we observe for any $\bm{z} \in [0,1]^{N_e}$  \[\mathbb{H}(\bm{z}) \leq \bm{b}\] 
	by definition of the Heaviside function given in Equation~\eqref{eq:H}.
	Next we note from Lemma~\ref{l:finite} that the set $\{\mathbb{H}(\bm{z}) \; | \; \bm{z} \in [0,1]^{N_e}\}$ is finite. 
	Since the set is finite we know there exists some $\hat{\bm z} \in [0,1]^{N_e}$ such that 
	 $\bar{z} := \mathbb{H}(\hat{\bm{z}})$ fulfills
	\begin{align}\label{eq:minJ}
		J(\bar{\bm{z}}) \leq J(\mathbb{H}(\bm{z})) \quad \text{ for all } \quad \bm{z} \in [0,1]^{N_e}.
	\end{align}
	Here $\bar{\bm z}$ is our candidate for a solution to \eqref{eq:opt}. 
	Next we show for arbitrary $\bm{z}$ that $J(\mathbb{H}(\bm{z})) \leq J(\bm{z})$.
	If $\bm{z} = \mathbb{H}(\bm{z})$ then we are done. 
	Else, there exist some $k = 1,\dots,N_e$ such that $ 0 < \bm{z}_k < 1$.
	Then for any ($\hat{x}$, $\hat{y}) \in \bm{\Omega}$ and setting
	\[
		\boxed{\textrm{I}} := \| \bm{W}(\bm{e},\bm{z}) - (\hat{{x}} \cdot \bm{b}, \; \hat{{y}} \cdot \bm{b})\|_{2,r}
	\]
	we immediately obtain that 	
	\begin{align}\label{eq:zwhzw}
	\bm{z}_k \cdot \delta\left(\boxed{\textrm{I}}\right)_k 
	\leq \mathbb{H}(\bm{z}_k) \cdot \delta\left(\boxed{\textrm{I}}\right)_k ,
	\end{align}
	where $\delta\left(\boxed{\textrm{I}}\right)_k$ 
	represents the $k$-th element of 
	$ \delta\left(\boxed{\textrm{I}}\right). $ 
	Using Equation~\eqref{eq:zwhzw} and the fact that log is a monotone increasing function we have the 
	following inequality:
	\begin{align}\label{eq:IzIHz}
	 \bm{z}_k \cdot \delta\left(\boxed{\textrm{I}}\right)_k  \cdot \log\left(\bm{z}_k \cdot \delta\left(\boxed{\textrm{I}}\right)_k\right) 
	  \leq  \mathbb{H}(\bm{z}_k) \cdot \delta\left(\boxed{\textrm{I}}\right)_k \cdot \log\left(\mathbb{H}(\bm{z}_k) \cdot \delta\left(\boxed{\textrm{I}}\right)_k\right).
	\end{align} 
	We note the common definition of $0 \cdot \log(0) := 0$, hence this holds even for 
	$\delta\left(\boxed{\textrm{I}}\right)_k = 0.$ 
	Using the definition of $\bm{I}(\bm{\Omega}, \cdot)$ in \eqref{eq:I} and Equation~\eqref{eq:IzIHz} we have:
	\[\bm{I}((\hat{x},\hat{y}),\bm{z}) \leq \bm{I}((\hat{x},\hat{y}),\mathbb{H}(\bm{z})).\]
	Taking the sum over all possible $(\hat{x},\hat{y})\in \bm{\Omega}$ gives us:
	\begin{align}\label{eq:EIEHI}
	E\big(\bm{I}(\bm{\Omega},\bm{z})\big) \leq E\big(\bm{I}(\bm{\Omega},\mathbb{H}(\bm{z}))\big).
	\end{align}
	Combining Equations~\eqref{eq:minJ} and \eqref{eq:EIEHI} along with Lemma~\ref{l:veq} we have for any $\bm{z} \in [0,1]^{N_e}$:
	\[J(\bm{z}) \geq J(\mathbb{H}(\bm{z})) \geq J(\bm{\bar{z}}).\]
	We conclude that a solution to \eqref{eq:opt} exists for a single player. 
	
	\smallskip
	\noindent	
	\boxed{\text{Part II: Two player case.}} 
	We now show that given  
	\[\bar{\bm{{z}}}^1 
		= \argmin \Big\{J^1(\bm{z}^1) \; | \; \bm{z}^1\leq \bm{b}, \, \bm{z}^1 \in [0,1]^{N_e} \Big\},
	\]  
	a solution exists for 
	\[
		\min \Big\{J^2(\bm{z}^2) \; | \; \sum_{i=1}^{2} \bm{z}^i\leq \bm{b}, \, 
		\bm{z}^2 \in [0,1]^{N_e} \Big\}.
	\]
	First we notice from \boxed{\textrm{Part I}} that the components of $\bar{\bm z}^1$ are either 0 
	or 1. Now, define the set 
	\[
		M = \{ m = 1,\dots,N_e \; | \; (\bar{\bm z}^1)_m = 1 \} \subset \mathbb{R}^{\#M} \, .
	\]
	Then for $m \in M^c$ we have $(\bar{\bm z}^1)_m = \bm 0 \in \mathbb{R}^{\#M^c}$. Since we have 
	\[
		\bm b \ge \sum_{i=1}^2 \bm z^i  = 
		\begin{pmatrix}
			(\bar{\bm z}^1)_{m\in M} + (\bar{\bm z}^2)_{m \in M}  \\
			(\bar{\bm z}^1)_{m\in M^c} + (\bar{\bm z}^2)_{m \in M^c}  
		\end{pmatrix}.
	\]
	Using the definition of $\bm b$, for $m \in M$, we have that $(\bar{\bm z}^2)_{m} = \bm 0 \in \mathbb{R}^{\#M}$ 
	and for $m \in M^c$ we have that $(\bar{\bm z}^2)_{m} \in [0,1]^{\#M^c}$. This proves the existence of
	$\bar{\bm z}^2$. Similar arguments helps prove existence of $\bar{\bm z}^3$, etc. and the proof is complete.	
\end{proof}

Next, we show that the $N$-level optimization presented in Equation \eqref{eq:opt} produces an equilibrium point to \eqref{eq:Model}.
\begin{theorem}
	The N-level optimization described in \eqref{eq:opt} produces a Generalized Nash Equilibrium to \eqref{eq:Model}.
\end{theorem}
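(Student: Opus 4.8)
The plan is to verify directly that the point $\bar{\bm{z}} = (\bar{\bm{z}}^1, \dots, \bar{\bm{z}}^N)^\top$ assembled by the $N$-level scheme \eqref{eq:opt} meets the equilibrium condition of Definition~\ref{d:gne}: for every player $j$, the block $\bar{\bm{z}}^j$ must solve player $j$'s problem \eqref{eq:Model} against the fixed rivals $\bar{\bm{z}}^{-j}$. The structural feature I would lean on is that each cost $J^j$ in \eqref{eq:J} depends on $\bm{z}^j$ alone, so that all coupling between players is carried by the single shared constraint $\bm{g}(\bm{z}) = \sum_{i=1}^N \bm{z}^i - \bm{b} \le \bm{0}$. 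Because of this separability, comparing player $j$'s subproblem in \eqref{eq:opt} with its equilibrium subproblem reduces to comparing the two constraint sets while the objective stays fixed.

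First I would establish the feasibility half of Definition~\ref{d:gne}. The last level $j=N$ of \eqref{eq:opt} minimizes $J^N$ over $\bm{Z}^N(\bar{\bm{z}}^{-N})$, whose defining inequality is $\sum_{i=1}^N \bm{z}^i \le \bm{b}$; since $\bar{\bm{z}}^N$ is feasible there, the assembled tuple satisfies $\sum_{i=1}^N \bar{\bm{z}}^i \le \bm{b}$, i.e. $\bm{g}(\bar{\bm{z}}) \le \bm{0}$, and hence $\bar{\bm{z}}^j \in \bm{Z}^j(\bar{\bm{z}}^{-j})$ for each $j$.

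Next I would fix $j$ and set
\[
A_j := \Big\{ \bm{z}^j \in [0,1]^{N_e} \; \big| \; \bm{z}^j + \textstyle\sum_{i=1}^{j-1} \bar{\bm{z}}^i \le \bm{b} \Big\},
\]
so that, by construction, $\bar{\bm{z}}^j = \argmin_{A_j} J^j$. The heart of the argument is the inclusion $\bm{Z}^j(\bar{\bm{z}}^{-j}) \subseteq A_j$: any $\bm{z}^j$ with $\bm{z}^j + \sum_{i \ne j} \bar{\bm{z}}^i \le \bm{b}$ also satisfies $\bm{z}^j + \sum_{i=1}^{j-1} \bar{\bm{z}}^i \le \bm{b}$ after discarding the non-negative tail $\sum_{i=j+1}^N \bar{\bm{z}}^i \ge \bm{0}$. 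Thus $\bar{\bm{z}}^j$ minimizes $J^j$ over the superset $A_j$ and, by the previous paragraph, lies in the subset $\bm{Z}^j(\bar{\bm{z}}^{-j})$; a minimizer over a larger set that happens to belong to a smaller one is automatically a minimizer over the smaller set. This gives $\bar{\bm{z}}^j \in \argmin_{\bm{Z}^j(\bar{\bm{z}}^{-j})} J^j$ for all $j$, which is precisely the equilibrium requirement.

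I expect the main obstacle to be conceptual rather than computational: one must reconcile the sequential information structure of \eqref{eq:opt} --- where player $j$ reacts only to players $1,\dots,j-1$ --- with the symmetric equilibrium demand that $\bar{\bm{z}}^j$ be optimal against all of $\bar{\bm{z}}^{-j}$. Its resolution rests on two points I would make explicit: the later players only tighten, never relax, player $j$'s feasible region, because their strategies are non-negative; and objective separability makes the two subproblems share an identical cost, without which the $N$-level problem for $j<N$ would not even be well posed, as it would reference the not-yet-computed strategies $\bar{\bm{z}}^{j+1},\dots,\bar{\bm{z}}^N$.
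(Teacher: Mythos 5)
Your proposal is correct, and it actually proves more than the paper's own argument does. The paper's proof checks only the feasibility half: it observes that $\sum_{j=1}^N \bar{\bm{z}}^j \le \bm{b}$ holds because the top level of \eqref{eq:opt} enforces the full shared constraint, concludes $\bar{\bm{z}}^j \in \bm{Z}^j(\bar{\bm{z}}^{-j})$ for each $j$, and stops there --- which matches Definition~\ref{d:gne} as literally written in the paper (that definition omits the usual optimality clause and asks only for mutual feasibility). You verify feasibility the same way, but then add the substantive step the paper skips: the nested-feasible-set argument. Your inclusion $\bm{Z}^j(\bar{\bm{z}}^{-j}) \subseteq A_j$, obtained by discarding the nonnegative tail $\sum_{i=j+1}^N \bar{\bm{z}}^i \ge \bm{0}$, combined with the fact that a minimizer over a superset that lies in the subset also minimizes over the subset, delivers $\bar{\bm{z}}^j \in \argmin_{\bm{Z}^j(\bar{\bm{z}}^{-j})} J^j$ in the standard sense of a generalized Nash equilibrium. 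You are also right to flag the two hypotheses this hinges on --- each $J^j$ depends on $\bm{z}^j$ alone, and all strategies are componentwise nonnegative --- both of which hold for \eqref{eq:J} and \eqref{eq:z}. The only thing worth adding is that your argument implicitly uses the existence of a minimizer at each level, which is supplied by Theorem~\ref{thm:exist}; with that reference your proof is complete and strictly strengthens the paper's.
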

\begin{proof}
	Let $\bar{\bm{z}}^j$, $j=1,\dots,N$, be the resulting argmin from each level of \eqref{eq:opt}. 
	Then we have the following:
	\begin{align*}
		&\sum_{j=1}^N \bar{\bm{z}}^j \leq \bm{b}
		\implies \sum_{j=1}^N \bar{\bm{z}}^j - \bm{b} \leq \bm{0}.
	\end{align*}
	Furthermore, from $\eqref{eq:opt}$ we know $\bar{\bm{z}}^j \in [0,1]^{N_e}.$ Then for each $j$ we have
	 \[\bar{\bm{z}}^j \in  \biggl\{[0,1]^{N_e} \; | \; \sum_{i=1}^N \bar{\bm{z}}^i - \bm{b} \leq \bm{0} \biggr\}   =: \bm{Z}^j(\bar{\bm{z}}^{-j}).\]
	We conclude by Definition~\ref{d:gne}
	$ 
	\bar{\bm{z}} = 
	\begin{pmatrix}
	\bar{\bm{z}}^1 & \bar{\bm{z}}^2 & \cdots & \bar{\bm{z}}^N
	\end{pmatrix}^\top\,
	$ is an equilibrium point of \eqref{eq:Model}.
\end{proof}
%

\section{Experimental Setup}\label{s:numres}

\subsection{Event Data Pre-processing}
We observe that the dimension of $\bm{z}^j$ directly corresponds to the number of events under consideration. As the number of events increases, so does the dimensionality of the variable, potentially leading to computational challenges. To mitigate this issue, we employ a pre-processing step where we selectively remove time intervals of events from our data.
By doing so, we effectively limit the dimensionality of the optimization variable, allowing for more efficient and manageable computations while still preserving the essential information contained within the dataset. Figure \ref{f:data} gives a visual example.

\begin{figure}[!htb]
	\centering
	\includegraphics[width=.45\textwidth]{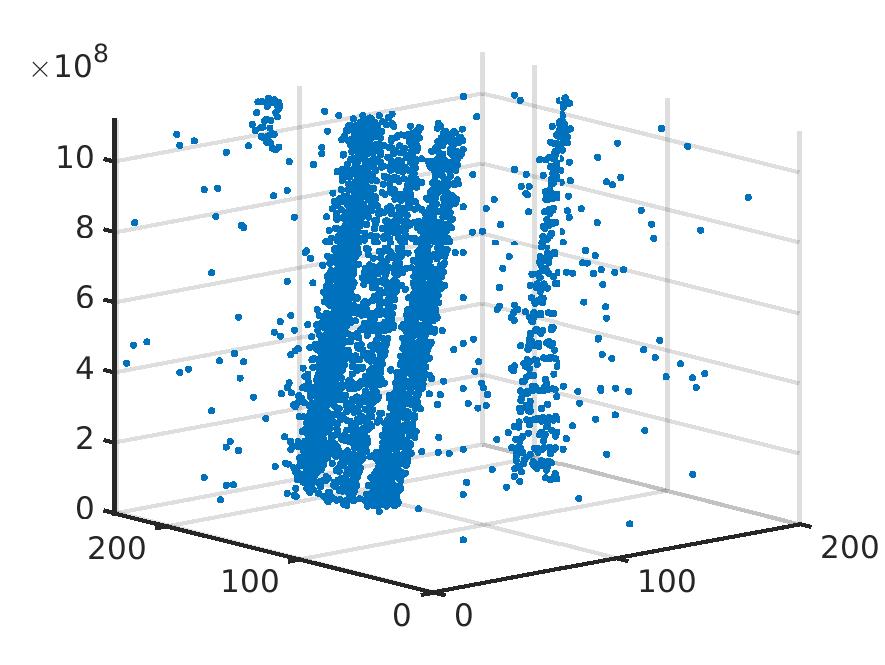}  \qquad
	\includegraphics[width=.45\textwidth]{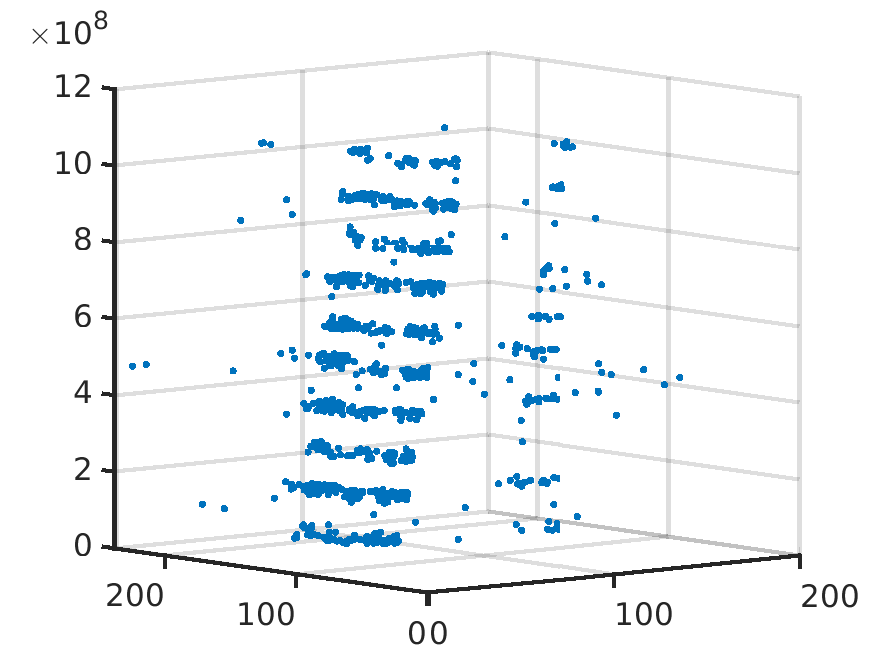} \qquad
	\caption{In this figure we see the original event data in the left panel and the pre-processed data in the right panel. }
	\label{f:data}
\end{figure}

\subsection{Numerical Relaxations}\label{s:nr}
We make several numerical relaxations in order to allow for gradient based methods.

\subsubsection{Heaviside Relaxation}
First we relax the Heaviside function described in \eqref{eq:H} to a continuous approximation given 
by: 
\[
	\mathbb{\hat{H}}(\bm{z}^j) := \frac{1}{2}\big(1 + \tanh(m \cdot (\bm{z}^j - n \cdot \bm{b}))\big).
\]
 Figure~\ref{f:H1} gives a visual representation of the approximation. For the purposes of this paper we chose $m = 25$ and $n=0.25$ for each of our examples. Through testing we found for $m > 50$ that the gradient of 
 $\mathbb{\hat{H}}(\bm{z}^j)$ becomes unstable. We see this visually in Figure~\ref{f:rh1}. Furthermore, we choose $n>0$ to ensure $\mathbb{\hat{H}}(0) \approx 0$ and $\mathbb{\hat{H}}(1) \approx 1$. If this is not the case, then the quality of the results decreases significantly.
 \begin{figure}[!htb]
 	\centering
 	\includegraphics[width=.45\textwidth]{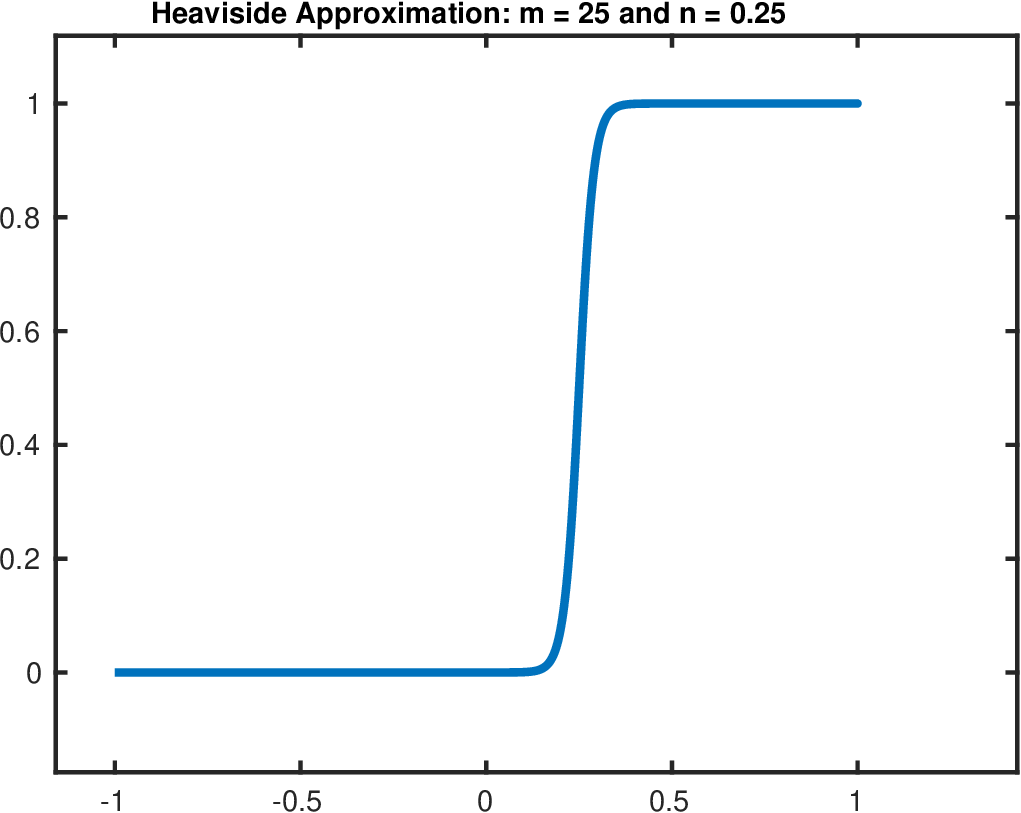}  \qquad
 	\caption{In this figure give a visual representation of the Heaviside relaxation function $\mathbb{\hat{H}}(\bm{z})$ (for $\bm{z} \in [-1,1])$ }
 	\label{f:H1}
 \end{figure}
 \DS{\begin{figure}[!htb]
 	\centering
 	\includegraphics[width=.3\textwidth]{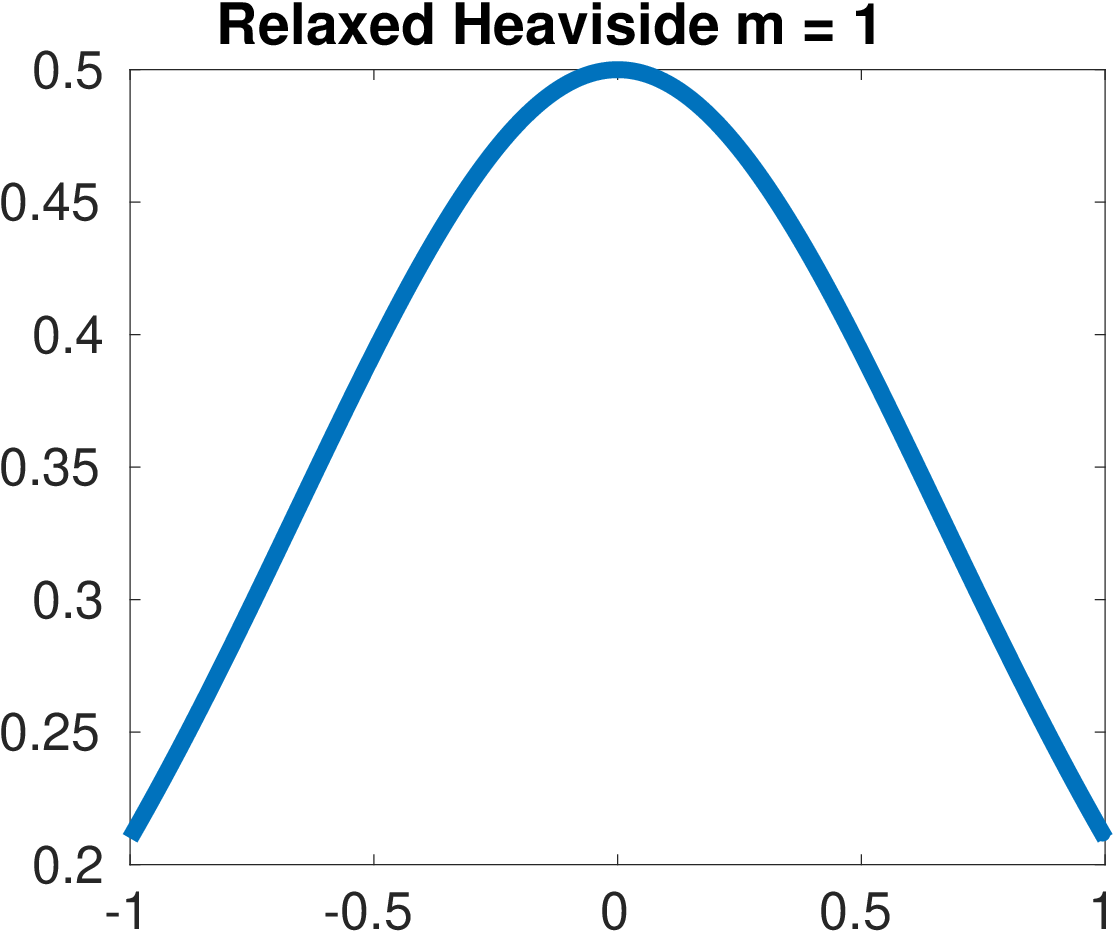}  \qquad
 	\includegraphics[width=.29\textwidth]{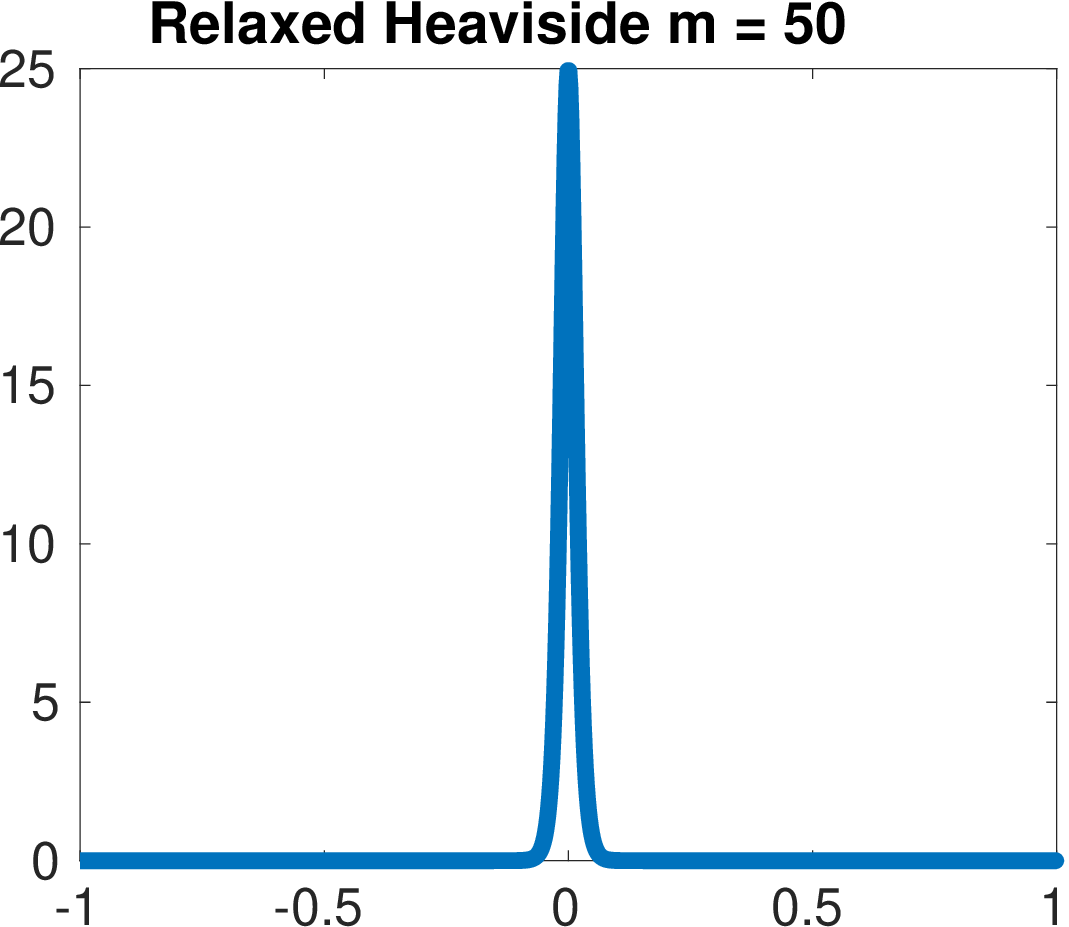} \qquad
 	\includegraphics[width=.29\textwidth]{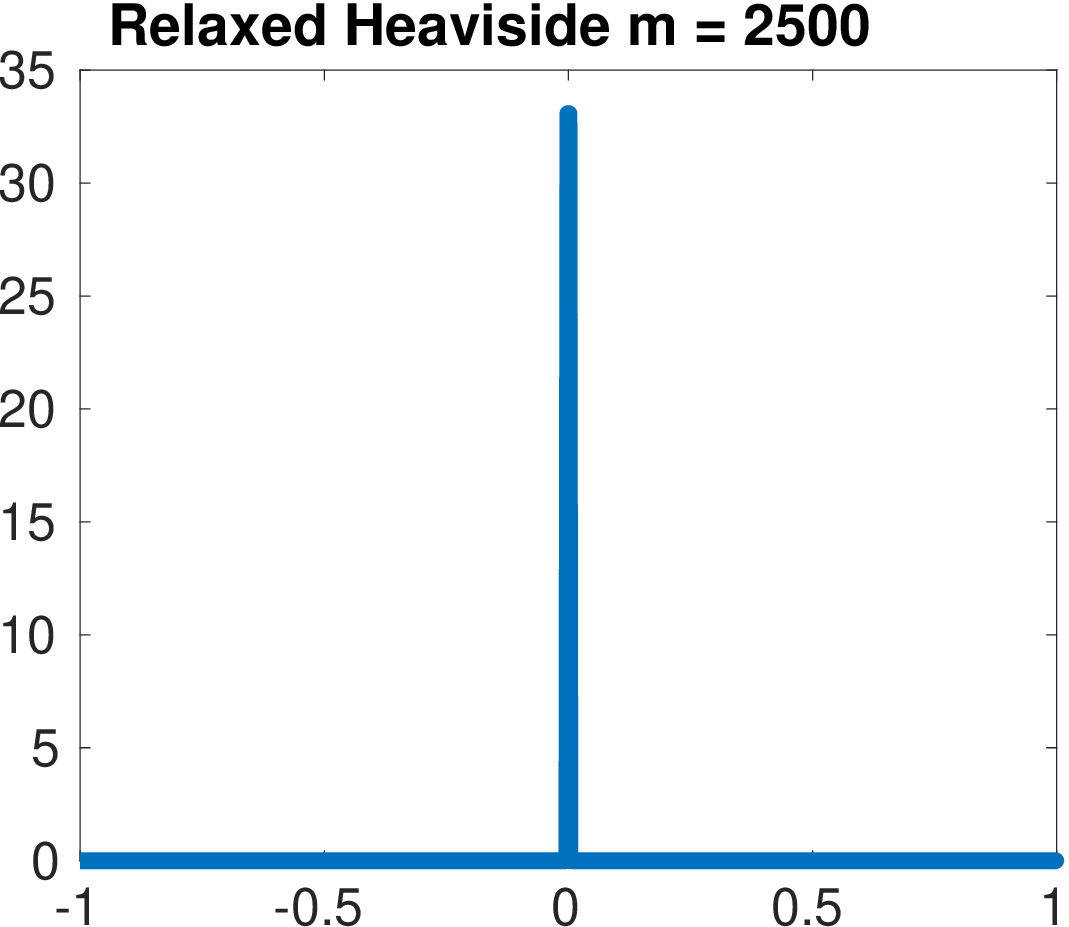}
 	\caption{In this figure we plot $\nabla\mathbb{\hat{H}}(\bm{z}^j)$ for $m = 1$, $50$, and $2500$. 
	As $m$ increases, $\nabla\mathbb{\hat{H}}(\bm{z}^j)$ approximates an impulse which can be 
	unstable in numerical calculations.}
 	\label{f:rh1}
 \end{figure}}
\subsubsection{Kronecker Delta Relaxation}
We also use the function: 
\[
	\exp\big(- \gamma \| \cdot \|^2_2\big)
\]
to approximate the Kronecker delta $\delta \big(\| \cdot \|_2\big)$ defined in \eqref{eq:kron}. 
Here $\gamma$ is chosen to be large enough such that the bump function decays rapidly. 
For the purposes of this paper we choose $\gamma = 30$. For $\gamma < 5$ there can be 
a drop off in the quality of results due to the bump function significantly overlapping more 
than one pixel. This can be seen  numerically in the following example: 
$\exp(-2\cdot\| (1,1) - (1,2) \|^2_2) = 0.1353$. We notice that (1,1) and (1,2) are one pixel 
apart, yet because $\gamma$ is small $0 \ll \exp(-2\cdot\| (1,1) - (1,2) \|^2_2) $, i.e., we 
have a poor approximation of Kronecker delta.  Figure~\ref{f:Kronapprox1} gives a visual 
representation of the relaxation function with $\gamma = 30$.

 \begin{figure}[!htb]
	\centering
	\includegraphics[width=.45\textwidth]{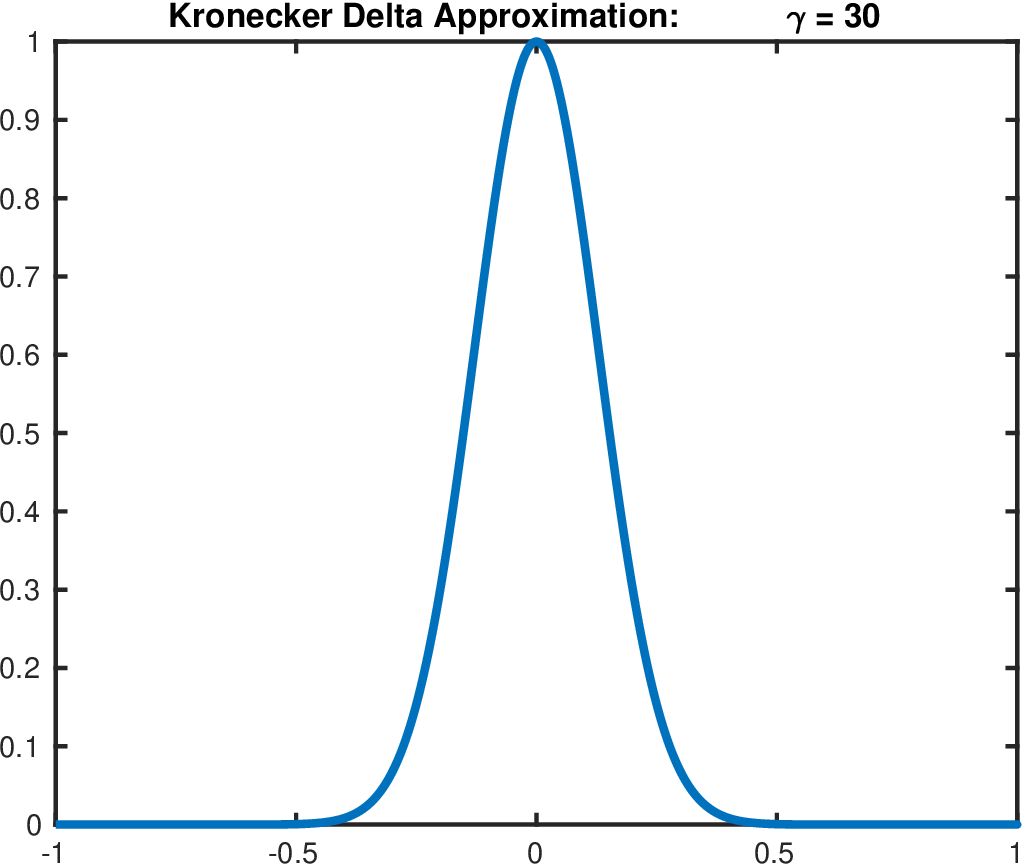}  \qquad
	\caption{In this figure give a visual representation of the Kronecker Delta relaxation function $ \exp(-\gamma \cdot \|\bm{z}\|^2_2)$ (for $\bm{z} \in [-1,1])$ }
	\label{f:Kronapprox1}
\end{figure}

\subsubsection{Entropy Relaxation}\label{Er}
We relax the Entropy function $E$ defined in \eqref{eq:E} by introducing two scalars 
parameters $\alpha$ and $\beta$:
\begin{align*}
E(\bm{I}^j) = \sum_{\hat{{x}}=1}^{N_x}\sum_{\hat{{y}}=1}^{N_y}\bm{I}^j((\hat{{x}},\hat{{y}}),\bm{z}^j) \cdot \log\bigg( \big(\alpha \big) \cdot \bm{I}^j((\hat{{x}},\hat{{y}}),\bm{z}^j) + \beta \bigg).
\end{align*}
These parameters are introduced in order to avoid $\log(0)$ numerically. We must impose the constraint: $\alpha = 1-\beta$ to ensure $\log(\alpha + \beta) = 0$. 
Under this condition the theoretical proofs in section~\ref{s:analysis} hold. We only check one of the
key inequalities \eqref{eq:IzIHz} next.
\begin{lemma}\label{l:ab}
	If $\alpha + \beta = 1$ where $\alpha, \beta \in (0,1]$ then the inequality \eqref{eq:IzIHz} becomes
	\[z \cdot \delta(\|\cdot\|^2) \cdot \log\bigg((\alpha) \cdot z \cdot \delta(\|\cdot\|^2)+ \beta\bigg) \leq \mathbb{H}(z) \cdot \delta(\|\cdot\|^2) \cdot \log\bigg((\alpha) \cdot\mathbb{H}(z) \cdot \delta(\|\cdot\|^2) + \beta\bigg)\]
	for any $\bm{z} \in [0,1]$.
\end{lemma}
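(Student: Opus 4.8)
The plan is to reduce the claimed inequality to an elementary one-variable estimate by exploiting that both the Kronecker delta and the Heaviside function take only the values $0$ and $1$ on the relevant domain. Abbreviating $d := \delta(\|\cdot\|^2) \in \{0,1\}$, I would split on the value of $d$. When $d = 0$, both sides collapse to $0 \cdot \log(\beta) = 0$ (here $\beta \in (0,1]$ keeps $\log\beta$ finite, so no special convention is even needed), and the inequality holds as $0 \le 0$.

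The remaining case $d = 1$ reduces the claim to
\[
z \cdot \log(\alpha z + \beta) \;\le\; \mathbb{H}(z) \cdot \log\!\big(\alpha\, \mathbb{H}(z) + \beta\big).
\]
Since $z \in [0,1]$, the definition of $\mathbb{H}$ in \eqref{eq:H} gives $\mathbb{H}(z) = 0$ precisely when $z = 0$ and $\mathbb{H}(z) = 1$ otherwise. For $z = 0$ both sides are again $0$, so the only substantive sub-case is $0 < z \le 1$, where $\mathbb{H}(z) = 1$ and the right-hand side equals $\log(\alpha + \beta)$.

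This is where the hypothesis $\alpha + \beta = 1$ does the essential work: it forces the right-hand side to be $\log(1) = 0$, so I only need to verify that the left-hand side is nonpositive. Because $\alpha \in (0,1]$ and $z \le 1$, I have $\alpha z + \beta \le \alpha + \beta = 1$, hence $\log(\alpha z + \beta) \le 0$; multiplying by $z > 0$ preserves the sign and yields $z \cdot \log(\alpha z + \beta) \le 0$, which is exactly the desired bound. Assembling the cases completes the argument. I do not expect a genuine obstacle here, since the computation is routine; the single point deserving emphasis is that the constraint $\alpha + \beta = 1$ is precisely what forces the right-hand side to vanish at $\mathbb{H}(z) = 1$, exactly mirroring how $\log 1 = 0$ makes the corresponding term vanish in the unrelaxed inequality \eqref{eq:IzIHz}.
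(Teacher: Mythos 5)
Your proposal is correct and follows essentially the same route as the paper's proof: a case split on the value of $\delta(\|\cdot\|^2)$ and on whether $z$ equals $0$ or lies in $(0,1]$, with the hypothesis $\alpha+\beta=1$ forcing the right-hand side to equal $\log(1)=0$ and the bound $\alpha z+\beta\le 1$ making the left-hand side nonpositive. Your write-up is, if anything, slightly more explicit than the paper's about why $z\cdot\log(\alpha z+\beta)\le 0$, but there is no substantive difference in the argument.
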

\begin{proof}
Let $\alpha + \beta = 1$. Then for $z = 0$ or $z=1$ we have $z = \mathbb{H}(z)$ and we are done. Furthermore if $ \delta(\|\cdot\|^2) = 0$ we get equality and we are done. 
Now, suppose $ 0 < z < 1$ and $\delta(\|\cdot\|^2) =1$. Then we have the following: 
\begin{align*}
	z \cdot \delta(\|\cdot\|^2) \cdot \log\bigg((\alpha) \cdot z \cdot \delta(\|\cdot\|^2)+ \beta\bigg) 
	&= z \cdot \log\bigg((\alpha) \cdot z + \beta\bigg) < \log(\alpha + \beta) = 0.
\end{align*}
Furthermore we have:
\[
\mathbb{H}(z) \cdot \delta(\|\cdot\|^2) \cdot \log\bigg((\alpha) \cdot\mathbb{H}(z) \cdot \delta(\|\cdot\|^2) + \beta\bigg) = 0.
\]
We conclude 
\[
z \cdot \delta(\|\cdot\|^2) \cdot \log\bigg((\alpha) \cdot z \cdot \delta(\|\cdot\|^2)+ \beta\bigg) \leq \mathbb{H}(z) \cdot \delta(\|\cdot\|^2) \cdot \log\bigg((\alpha) \cdot\mathbb{H}(z) \cdot \delta(\|\cdot\|^2) + \beta\bigg),
\]
given $\alpha + \beta =1$.
\end{proof}
\subsection{Solvers: Genetic Algorithm and SQP Algorithm}\label{s:gasqp}

\subsubsection{Initialization Using Genetic Algorithm}
For our examples, we first use the MATLAB genetic algorithm, ga, to calculate an initial value for use in a 
gradient based method. Recall in Theorem~\ref{thm:exist} we showed that $\min J^j(\bm{z}^j)$ will occur at a boundary point, $\mathbb{H}(\bm{z}).$ For this reason, we give the Genetic Algorithm integer 
constraints of $0$ and $1$. 
Since the Genetic Algorithm is not a gradient method we do not use the function relaxations given in Section~\ref{s:nr}.

\subsubsection{SQP Algorithm}
Once the Genetic Algorithm has completed we then use the output as input to the MATLAB `sqp' 
algorithm which is part of fmincon. Since this is a gradient based method the relaxations shown 
in Section~\ref{s:nr} are used.

\section{Numerical Examples}
\label{s:nex}
Several examples of our method are shown. For each of these results, a combination of a Genetic Algorithm and SQP Algorithm as described in section~\ref{s:gasqp} were used. The tolerances for these methods were $10^{-2}$ and $10^{-4}$, respectively.

In Figure \ref{f:ex1}, 2016 events are used. The original (unprocessed) event data is shown in the left pane while the image of warped events for both players is shown in the right pane. As shown our method is able to fully segment the image into two distinct objects with player 1 associating events generated by the vehicle and player 2 associating events to the pedestrian. These results for player 1 required 447 iterations of a genetic algorithm and an additional 14 iterations from the SQP algorithm. In addition, the results for player 2 required 366 genetic algorithm iterations and 27 SQP iterations. Furthermore, for player 1, Figure~\ref{f:ex1c}, compares the output of the genetic algorithm to the output of the SQP algorithm. As shown in Figure~\ref{f:ex1c} the genetic algorithm does associate like events. The drawback being that there are events that should be associated that are left off. In contrast the SQP algorithm associates many more like events. Moreover, for this particular example the tolerance for each algorithm has been set to $10^{-4}$ for a fair comparison.

\begin{figure}[!htb]
	\centering
	\includegraphics[width=.45\textwidth]{pedestrian_car_orig.eps}  \qquad
	\includegraphics[width=.45\textwidth]{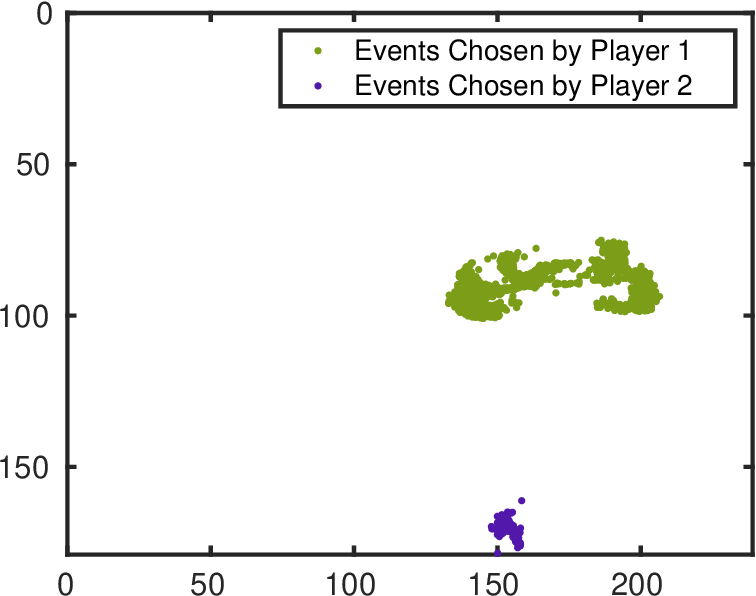}
	\caption{In this figure we show the original data in the leftmost pane. The results of our method with two players is shown in the right pane. In this example $N_e$, the number of total events, is 2016, $\lambda_1 = 10^{-3}$, $\lambda_2 = 1$, $\beta = 0.1$ and $\alpha = 0.9$. }
	\label{f:ex1}
\end{figure}

\begin{figure}[!htb]
	\centering
	\includegraphics[width=.45\textwidth]{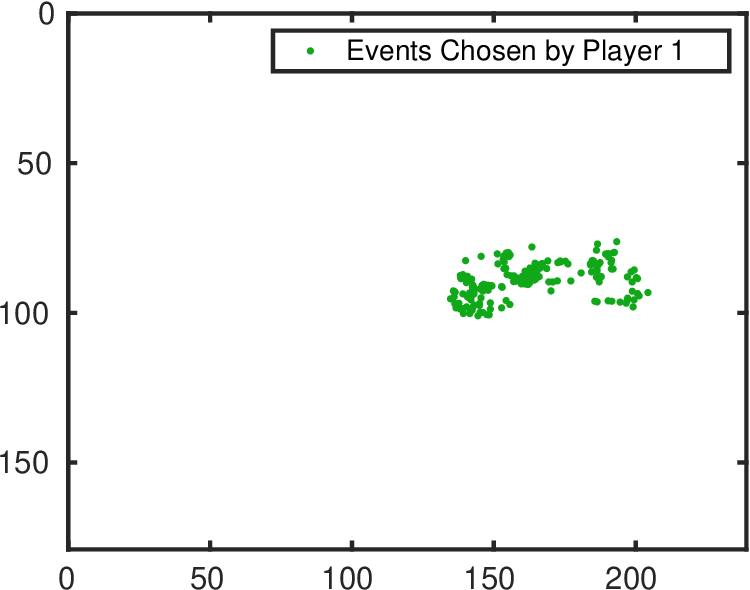}  \qquad
	\includegraphics[width=.45\textwidth]{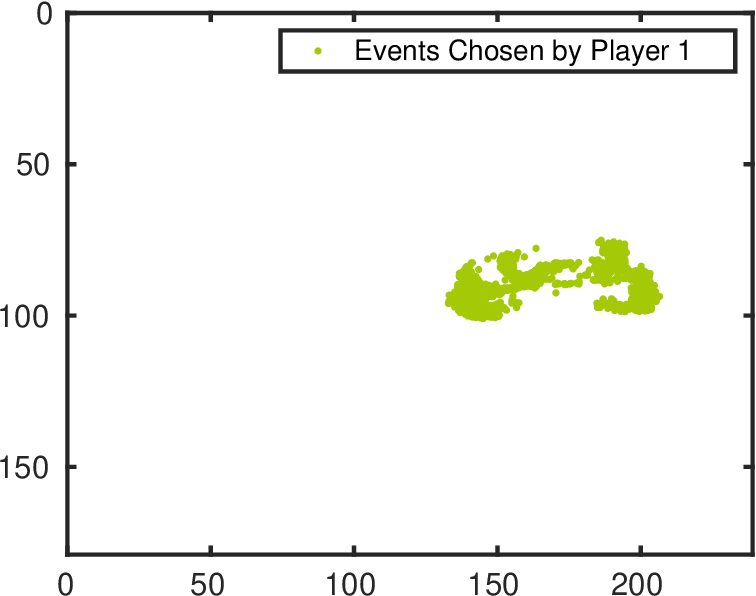}
	\caption{In this figure we show the the result of the genetic algorithm in the leftmost pane. In the right pane we show the result of the SQP algorithm, using the strategy produced by the genetic algorithm as an initial value. Notice that no relaxation of Heaviside function or Kronecker delta is used in case
	of the genetic algorithm. This example motivates that the results with relaxation are meaningful.}
	\label{f:ex1c}
\end{figure}

Figures \ref{f:ex2} - \ref{f:ex3} show our method applied to a multi-vehicle scenario. In Figure \ref{f:ex2}, 1490 events were used in the model. As shown (right) our model is able to segment the image into two distinct vehicles. For player 1 these results required 211 iterations of the genetic algorithm and an additional 8 iterations from the SQP algorithm. For the second player, 97 iterations of the genetic algorithm and 4 iterations of the SQP algorithm were required.

\begin{figure}[!h]
	\centering
	\includegraphics[width=.45\textwidth]{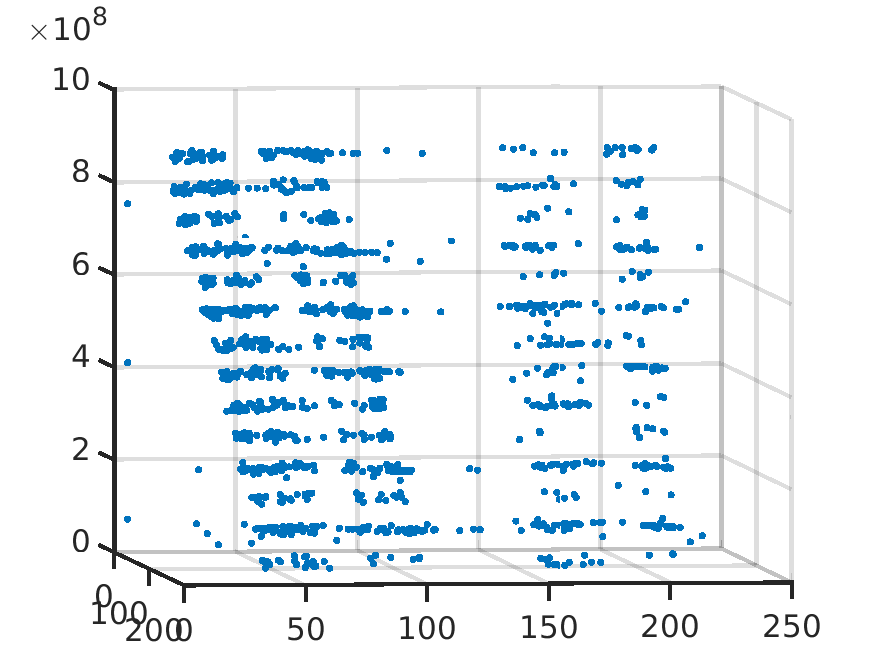}  \qquad
	\includegraphics[width=.45\textwidth]{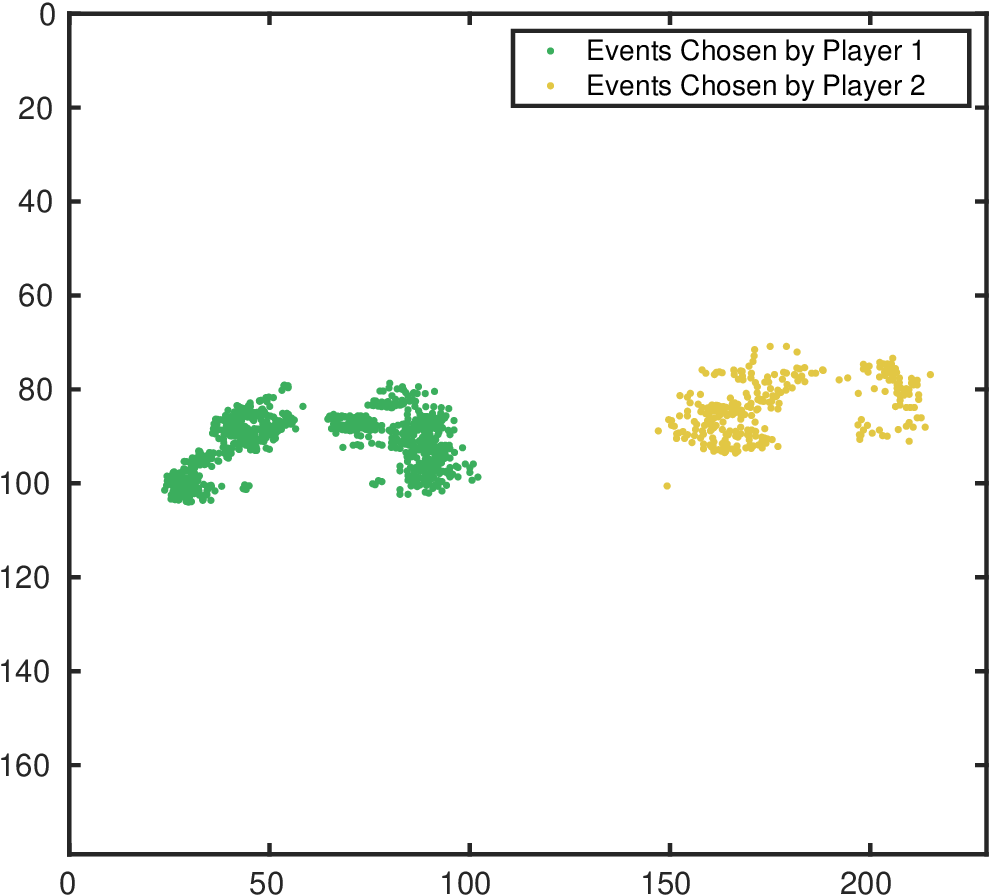}
	\caption{In this figure we show the processed data in the leftmost pane. The results of our method with two players is shown in the right pane. In this example $N_e$, the number of total events, is 1490, $\lambda_1 = 10^{-3}$, $\lambda_2 = 10$, $\beta = 0.1$ and $\alpha = 0.9$. }
	\label{f:ex2}
\end{figure}

Figure \ref{f:ex3} shows an additional multi-vehicle scenario with $N = 2$ players. For this example 2132 events were used. As shown in the right pane, our model is able to segment the image with only two pixels being classified incorrectly. These results required 211 iterations of the genetic algorithm and an additional 14 iterations of the SQP algorithm for player 1. Player 2 required 76 iterations of the genetic algorithm and 8 iterations of the SQP method.

\begin{figure}[!h]
	\centering
	\includegraphics[width=.45\textwidth]{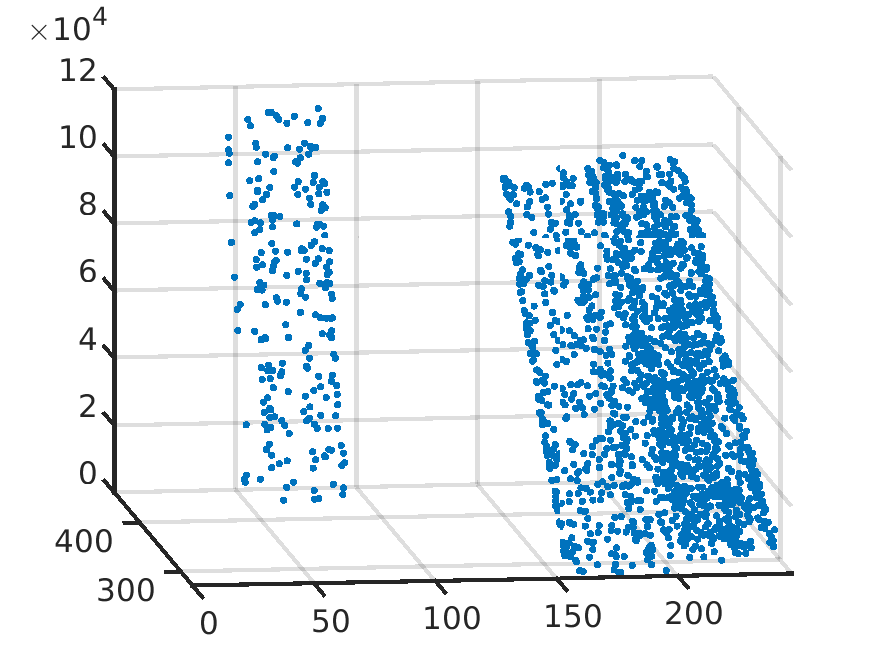}  \qquad
	\includegraphics[width=.45\textwidth]{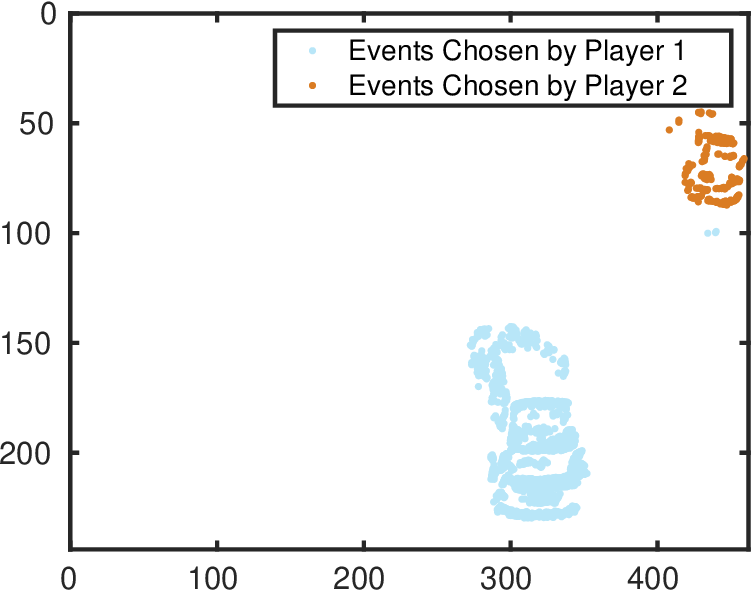}
	\caption{In this figure we show the processed data in the leftmost pane. The results of our method with two players is shown in the right pane. In this example $N_e$, the number of total events, is 2132, $\lambda_1 = 10^{-3}$, $\lambda_2 = 1.1$, $\beta = 0.1$ and $\alpha = 0.9$. }
	\label{f:ex3}
\end{figure}

As shown in Figure \ref{f:ex4} we have $N = 4$ players. These results used 1964 events with algorithm iterations shown in Table \ref{t:iter}. Left pane shows the processed input data, while the right pane shows our result. We note that there is slight overlap between a few pixels of players 1 and 4 and players 2 and 4 in this result.

\begin{figure}[!h]
	\centering
	\includegraphics[width=.45\textwidth]{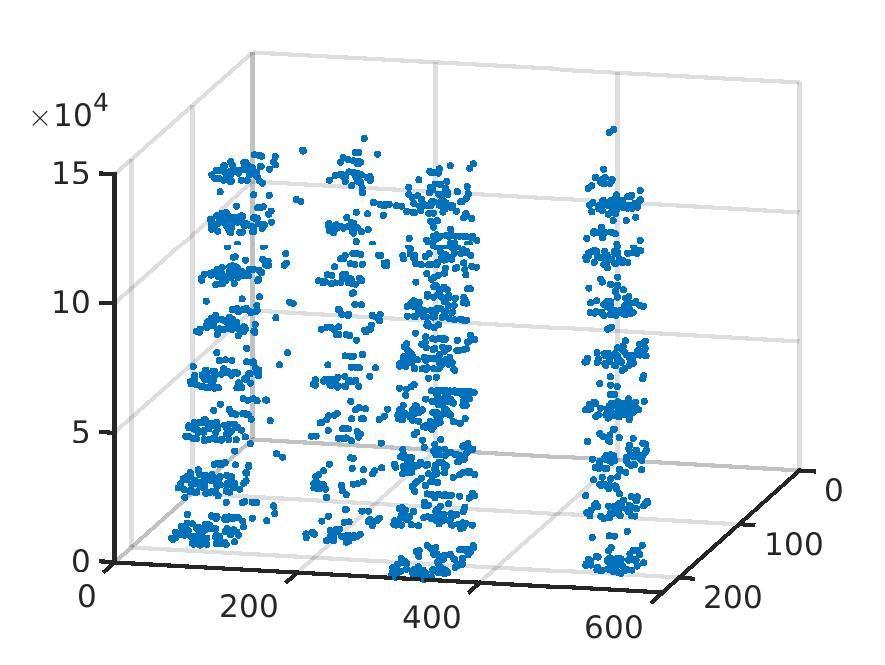}  \qquad
	\includegraphics[width=.45\textwidth]{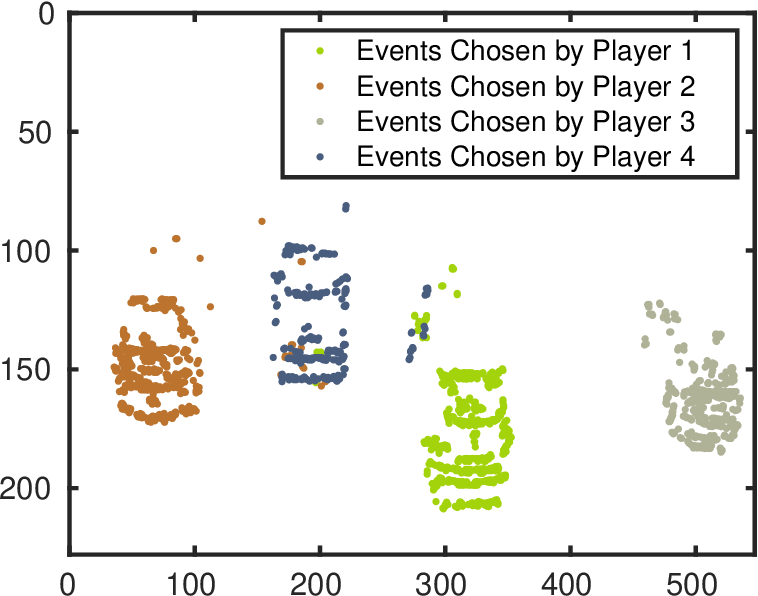}
	\caption{In this figure we show the processed data in the leftmost pane. The results of our method with four players is shown in the right pane. In this example $N_e$, the number of total events, is 1964, $\lambda_1 = 10^{-3}$, $\lambda_2 = 0$, $\beta = 0.1$ and $\alpha = 0.9$. }
	\label{f:ex4}
\end{figure}

\begin{table}[!h]
	\centering
	\caption{\label{t:iter}Algorithm Iterations for each player in the 4 player example.}
	\begin{tabular}{|c|c|c|}
		\hline
		& Genetic Algorithm  & SQP  \\
		\hline
		Player 1 & 583 & 7 \\
		\hline
		Player 2 & 455 & 14\\
		\hline
		Player 3 & 402 & 2\\
		\hline
		Player 4 & 269 & 2\\
		\hline
	\end{tabular}
\end{table}

\section*{Conclusion, Limitation, and Future Work}
In conclusion, this paper proposes a game theoretic model for the approximation of object velocities and segmentation of event data from a neuromorphic camera. The experimental results demonstrate the effectiveness of the proposed model in achieving accurate and robust segmentation of the event data. Furthermore, we have shown existence of solution to the Generalized Nash Equilibrium Problem and we have proposed a $N$-level optimization method for calculating equilibrium.

Though our method is able to generate equilibrium there are some limitations. 
For instance, manual tuning of $\lambda_1$, $\lambda_2$, and the highly nonconvex 
nature of the objective function leads to challenges for initial guesses for the iterates of
gradient based methods. In addition, we need to solve $N$-optimization problems. 
Finally, the dimension of the optimization variable 
$\bm{z}$ is equivalent to the number of events considered.

In future work we look to explore techniques that will allow us to use the convex hull of the objective function in order to use quasi-variational techniques to calculate equilibrium. Furthermore, we look to incorporate more advanced equations of motion for our warping function.

\section*{Acknowledgement}
The authors are grateful to Prof. Uday Shanbhag (Penn State) for inspiring us to explore 
the GNEP research area through various discussions and also sharing various references
on this topic. We are also thankful to Dr. Patrick O'Neil and Dr. Diego Torrejon (BlackSky) 
and Dr. Noor Qadri (US Naval Research Lab, Washington DC), for bringing the 
neuromorphic imaging topic to their attention and for several fruitful discussions.

\bibliographystyle{plain}
\bibliography{refs.bib}

\appendix 
\section{Visual Representation of Our Method} 
\label{s:app}

Figure~\ref{f:c_dat} illustrates the event data for two circular 
events over five timesteps. Player 1 and 2 respectively selects
the right and left events. This is shown in Figure~\ref{f:c_p1p2}. 
A projection mechanism, from 3D to 2D along the lines with 
slopes $\bm \theta^1$ and $\bm \theta^2$, is shown in 
Figure~\ref{f:c_projlines}. The projected data itself is given in 
Figure~\ref{f:c_proj2}.

\begin{figure}[!htb]
	\centering
	\includegraphics[width=\textwidth]{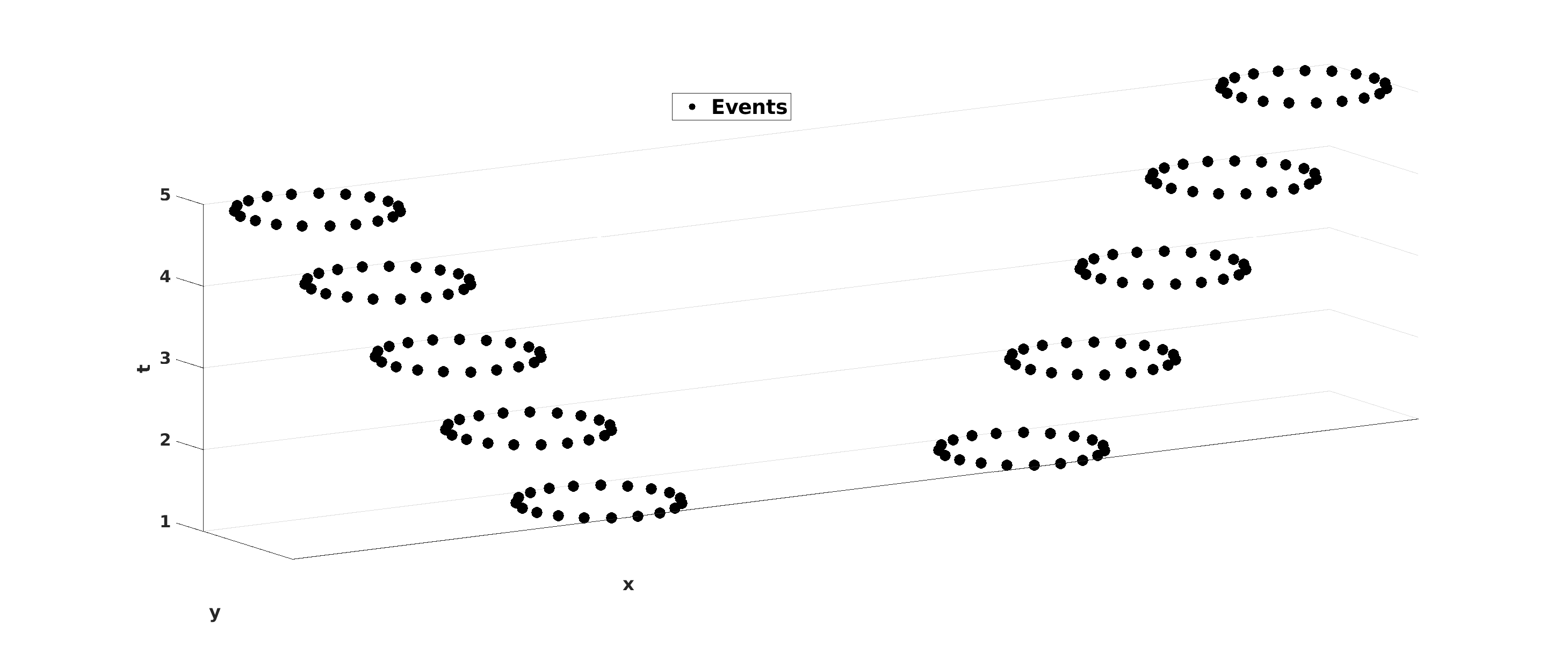}  \qquad
	\caption{In this figure we plot simulated event data of two circular objects over five timesteps.}
	\label{f:c_dat}
\end{figure}

\begin{figure}[!htb]
	\centering
	\includegraphics[width=\textwidth]{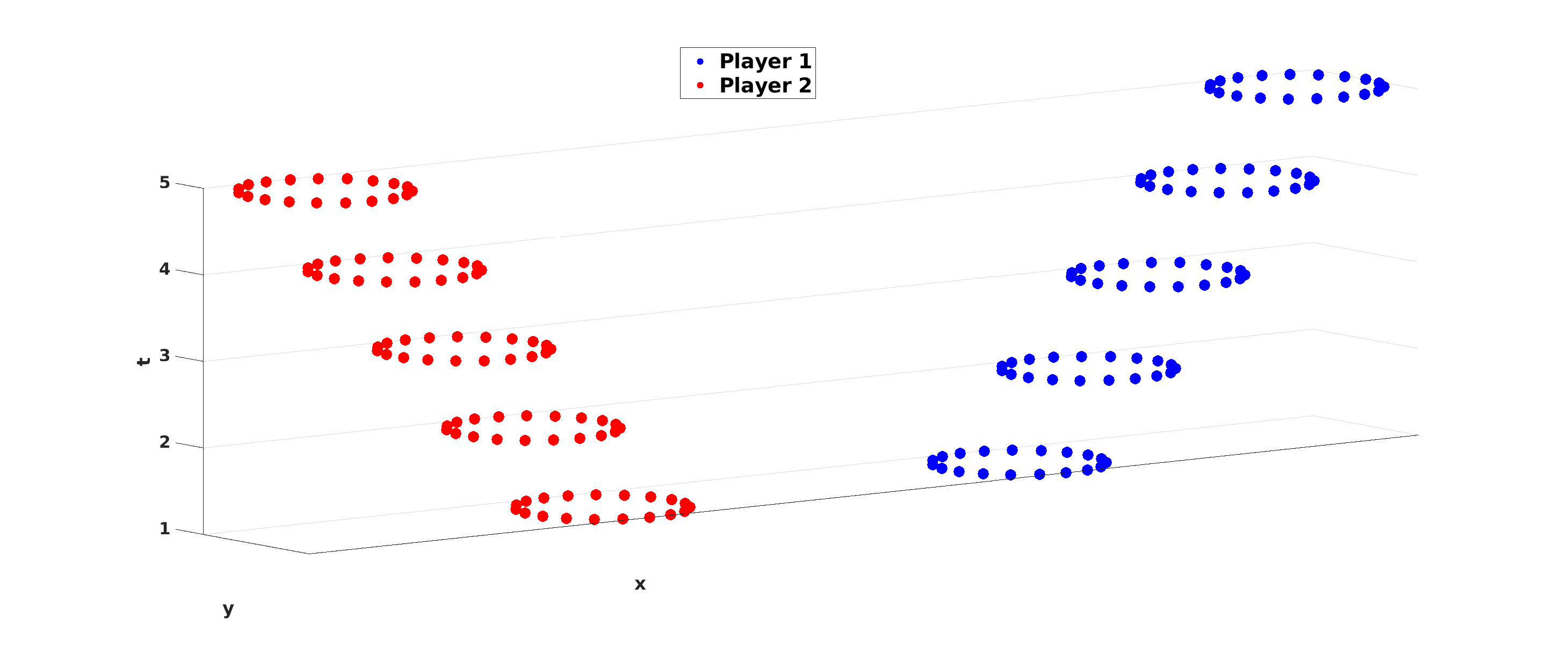}  \qquad
	\caption{In this figure we give a visual representation of the ideal strategies of Player 1 (blue) 
	and Player 2 (red), respectively for the event data given in Figure \ref{f:c_dat}.}
	\label{f:c_p1p2}
\end{figure}

\begin{figure}[!htb]
	\centering
	\includegraphics[width=\textwidth]{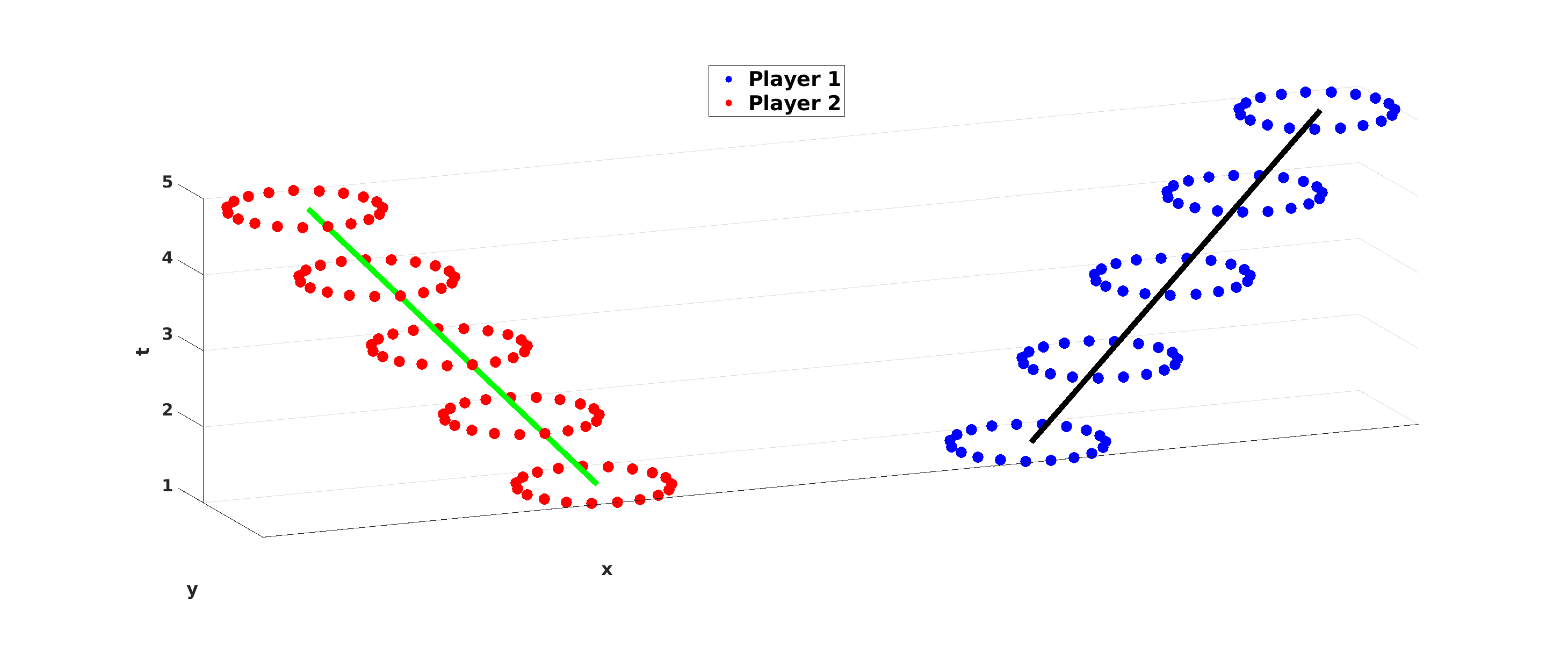}  \qquad
	\caption{In this figure we show the events chosen by Player 1 and Player 2 given in blue and red, respectively. Additionally, we plot projection lines given in black for player 1 and in green for player 2 that have slopes $\bm{\theta}^1$ and $\bm{\theta}^2$ respectively.}
	\label{f:c_projlines}
\end{figure}

\begin{figure}[!htb]
	\centering
	\includegraphics[width=\textwidth]{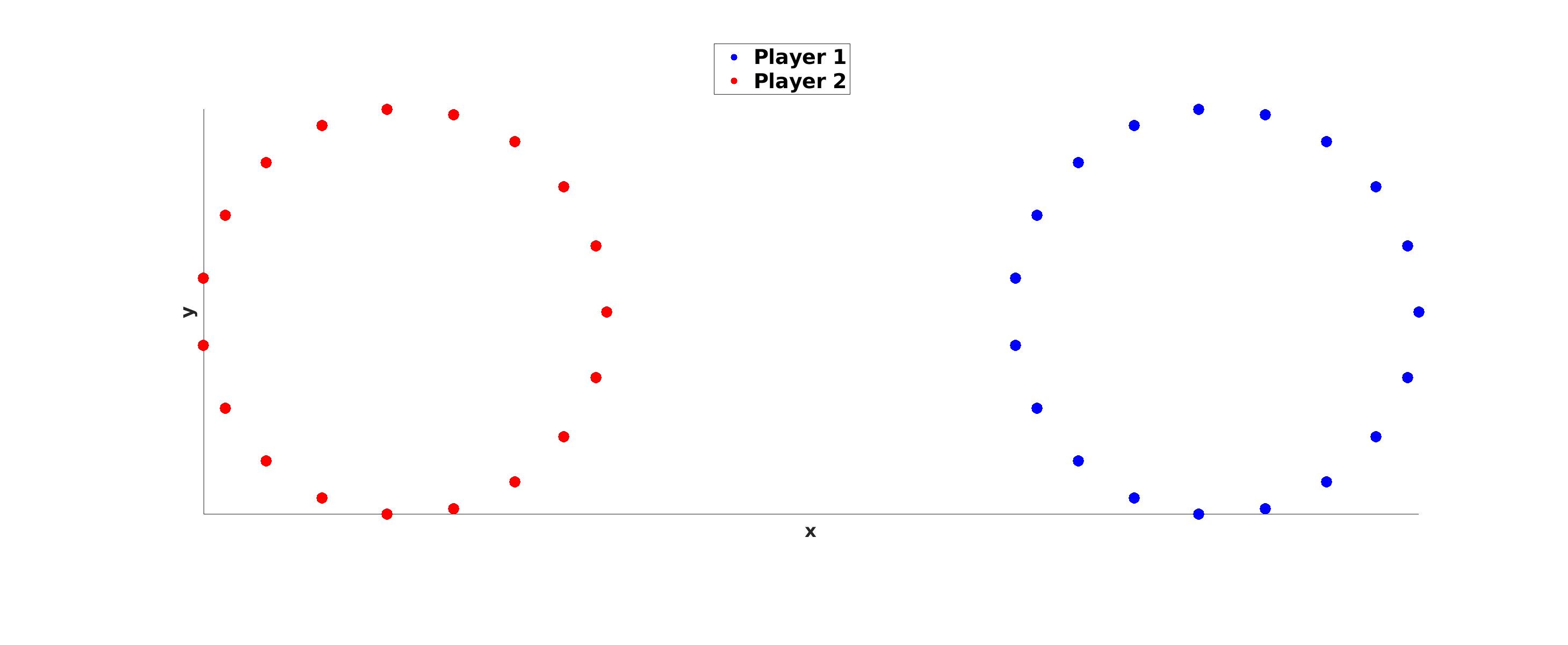}  \qquad
	\caption{In this figure we show the result of our method when applied to the data shown 
	in Figure~\ref{f:c_dat} with strategies of Player 1 and Player 2 shown in Figure \ref{f:c_p1p2}.}
	\label{f:c_proj2}
\end{figure}

\end{document}